\icmltitlerunning{Fair Regression: Quantitative Definitions and Reduction-based Algorithms}
\renewcommand{\paragraph}{%
  \@startsection{paragraph}{4}%
  {\z@}{.25ex \@plus 1ex \@minus .2ex}{-1em}%
  {\normalfont\normalsize\bfseries}%
}
\newtheorem{example}{Example}
\newtheorem{lemma}{Lemma}
\newtheorem{theorem}{Theorem}
\newtheorem{assume}{Assumption}
\newcommand{\card}[1]{\lvert#1\rvert}
\newcommand{\set}[1]{\{#1\}}
\newcommand{\braces}[1]{\{#1\}}
\newcommand{\bigBracks}[1]{\bigl[#1\bigr]}
\newcommand{\BigBracks}[1]{\Bigl[#1\Bigr]}
\newcommand{\biggBracks}[1]{\biggl[#1\biggr]}
\newcommand{\BiggBracks}[1]{\Biggl[#1\Biggr]}
\newcommand{\Bracks}[1]{\left[#1\right]}
\newcommand{\parens}[1]{(#1)}
\newcommand{\Parens}[1]{\left(#1\right)}
\newcommand{\bigParens}[1]{\bigl(#1\bigr)}
\newcommand{\BigParens}[1]{\Bigl(#1\Bigr)}
\newcommand{\biggParens}[1]{\biggl(#1\biggr)}
\newcommand{\given}{\mathbin{\vert}}
\newcommand{\bigGiven}{\mathbin{\bigm\vert}}
\newcommand{\norm}[1]{\lVert#1\rVert}
\newcommand{\abs}[1]{\left\lvert#1\right\rvert}
\newcommand{\Abs}[1]{\left\lvert#1\right\rvert}
\newcommand{\bigAbs}[1]{\bigl\lvert#1\bigr\rvert}
\newcommand{\BigAbs}[1]{\Bigl\lvert#1\Bigr\rvert}
\newcommand{\floors}[1]{\lfloor#1\rfloor}
\newcommand{\ceils}[1]{\lceil#1\rceil}
\newcommand{\Ehat}{\widehat{\mathbb{E}}}
\newcommand{\Phat}{\widehat{\mathbb{P}}}
\renewcommand{\P}{\mathbb{P}}
\newcommand{\cost}{\textup{\mdseries cost}}
\newcommand{\hcost}{\widehat{\cost}}
\newcommand{\loss}{\textup{\mdseries loss}}
\newcommand{\hloss}{\widehat{\loss}}
\newcommand{\E}{\ensuremath{\mathbb{E}}}
\newcommand{\calH}{\ensuremath{\mathcal{H}}}
\newcommand{\calF}{\ensuremath{\mathcal{F}}}
\newcommand{\calX}{\ensuremath{\mathcal{X}}}
\newcommand{\calY}{\ensuremath{\mathcal{Y}}}
\newcommand{\calZ}{\ensuremath{\mathcal{Z}}}
\newcommand{\calA}{\ensuremath{\mathcal{A}}}
\newcommand{\calU}{\ensuremath{\mathcal{U}}}
\newcommand{\calJ}{\ensuremath{\mathcal{J}}}
\newcommand{\calG}{\ensuremath{\mathcal{G}}}
\newcommand{\calS}{\ensuremath{\mathcal{S}}}
\newcommand{\hy}{\ensuremath{\hat{y}}}
\newcommand{\heps}{\widehat{\eps}}
\newcommand{\hveps}{\widehat{\veps}}
\DeclareMathOperator*{\argmin}{arg\,min}
\DeclareMathOperator*{\argmax}{arg\,max}
\newcommand{\ind}{\mathbf{1}}
\newcommand{\eps}{\varepsilon}
\newcommand{\R}{\ensuremath{\mathbb{R}}}
\newcommand{\hzeta}{\widehat{\zeta}}
\newcommand{\hvzeta}{\widehat{\boldsymbol{\zeta}}}
\renewcommand{\phi}{\varphi}
\newcommand{\vlambda}{\boldsymbol{\lambda}}
\definecolor{purple}{rgb}{0.6, 0.4, 0.8}
\definecolor{red}{rgb}{1.0, 0.03, 0.0}
\definecolor{orange}{rgb}{0.93, 0.57, 0.13}
\definecolor{yellow}{rgb}{0.94, 0.88, 0.19}
\definecolor{green}{rgb}{0.0, 0.55, 0.55}
\definecolor{blue}{rgb}{0.0, 0.75, 1.0}
\renewcommand{\sp}{\textup{SP}\xspace}
\newcommand{\gammaE}{\gamma^{\textup{BGL}}}
\newcommand{\hgammaE}{\hgamma^{\textup{BGL}}}
\newcommand{\hvgammaE}{\hvgamma^{\textup{BGL}}}
\newcommand{\LE}{L^{\textup{BGL}}}
\newcommand{\underf}{\underaccent{\bar}{f}}
\newcommand{\underQ}{\underline{\smash{Q}\!\!}\,\,}
\newcommand{\underF}{\underaccent{\bar}{\calF}}
\newcommand{\ty}{\tilde{y}}
\newcommand{\tcalY}{\tilde{\calY}}
\newcommand{\tY}{\tilde{Y}}
\newcommand{\undery}{\underaccent{\bar}{y}}
\newcommand{\underY}{\underaccent{\bar}{Y}}
\newcommand{\cardZ}{N}
\newcommand{\cardA}{\card{\calA}}
\newcommand{\vzero}{\mathbf{0}}
\newcommand{\hvgamma}{\widehat{\boldsymbol{\gamma}}}
\newcommand{\vgamma}{\boldsymbol{\gamma}}
\newcommand{\hvlambda}{\widehat{\boldsymbol{\lambda}}}
\newcommand{\ve}{\mathbf{e}}
\newcommand{\hQ}{\widehat{Q}}
\newcommand{\vtheta}{\boldsymbol{\theta}}
\newcommand{\hgamma}{\widehat{\gamma}}
\newcommand{\veps}{\boldsymbol{\eps}}
\newcommand{\Eq}[1]{Eq.~\eqref{eq:#1}}
\newcommand{\Eqs}[2]{Eqs.~\eqref{eq:#1} and~\eqref{eq:#2}}
\newcommand{\Alg}[1]{Algorithm~\ref{alg:#1}}
\newcommand{\Sec}[1]{Section~\ref{sec:#1}}
\newcommand{\Thm}[1]{Theorem~\ref{thm:#1}}
\newcommand{\Lemma}[1]{Lemma~\ref{lemma:#1}}
\newcommand{\nullval}{\textit{\mdseries null}}
\newcommand{\ep}{bounded group loss\xspace}
\newcommand{\EP}{Bounded Group Loss\xspace}
\newcommand{\epshort}{BGL\xspace}
\newcommand{\BestLambda}{\textsc{\mdseries Best}_{\vlambda}}
\newcommand{\BestH}{\textsc{\mdseries Best}_h}
\newcommand{\BestF}{\textsc{\mdseries Best}_f}
\newtheorem{definition}{Definition}
\newcommand{\prob}{\textrm{fair regression}\xspace}
\newcommand{\order}{O}
\newcommand{\otil}{\widetilde{O}}
\newcommand{\sign}{\ensuremath{\text{sign}}}
\newcommand{\Step}[1]{Step~\ref{step:#1}}
\newcommand{\StepsRange}[1]{Steps~\ref{step:#1:start}--\ref{step:#1:end}} 
\begin{document}

\twocolumn[
\icmltitle{Fair Regression: Quantitative Definitions and Reduction-based Algorithms}



\icmlsetsymbol{equal}{*}

\begin{icmlauthorlist}
\icmlauthor{Alekh Agarwal}{msrred}
\icmlauthor{Miroslav Dud\'ik}{msrnyc}
\icmlauthor{Zhiwei Steven Wu}{umn}
\end{icmlauthorlist}

\icmlaffiliation{msrnyc}{Microsoft Research, New York, NY}
\icmlaffiliation{msrred}{Microsoft Research, Redmond, WA}
\icmlaffiliation{umn}{University of Minnesota, Minneapolis, MN}

\icmlcorrespondingauthor{A.~Agarwal}{alekha@microsoft.com}
\icmlcorrespondingauthor{M.~Dud\'ik}{mdudik@microsoft.com}
\icmlcorrespondingauthor{Z.~S.~Wu}{zsw@umn.edu}

\icmlkeywords{Machine Learning, ICML}

\vskip 0.3in
]



\printAffiliationsAndNotice{}  

\setlength{\abovedisplayskip}{3pt}
\setlength{\belowdisplayskip}{3pt}
\setlength{\textfloatsep}{10pt plus 2pt minus 2pt}
\setlength{\dbltextfloatsep}{10pt plus 2pt minus 2pt}

\begin{abstract}
In this paper, we study the prediction of a real-valued target, such as a risk score or recidivism rate, while guaranteeing a quantitative notion of fairness with respect to a protected attribute such as gender or race. We call this class of problems \emph{fair regression}. We propose general schemes for fair regression under two notions of fairness: (1) statistical parity, which asks that the prediction be statistically independent of the protected attribute, and (2) bounded group loss, which asks that the prediction error restricted to any protected group remain below some pre-determined level. While we only study these two notions of fairness, our schemes are applicable to arbitrary Lipschitz-continuous losses, and so they encompass least-squares regression, logistic regression, quantile regression, and many other tasks. Our schemes only require access to standard risk minimization algorithms (such as standard classification or least-squares regression) while providing theoretical guarantees on the optimality and fairness of the obtained solutions. In addition to analyzing theoretical properties of our schemes, we empirically demonstrate their ability to uncover fairness--accuracy frontiers on several standard datasets.\looseness=-1
\end{abstract}

\section{Introduction}
\label{sec:intro}


As machine learning touches increasingly critical aspects of our life, including education, healthcare, criminal justice and lending, there is a growing focus to ensure that the algorithms treat various subpopulations fairly (see, e.g., \citealp{barocas16big}; \citealp{podesta2014big}; \citealp{CorbettDaviesGo18}; and references therein). These questions have been particularly extensively researched in the context of classification, where several quantitative measures of fairness have been proposed~\citep{berk,chouldechova,hardt16,kleinberg2017inherent}, leading to a variety of algorithms that aim
to satisfy them~(see, e.g., \citealp{CorbettDaviesGo18}, for an overview of the literature).\looseness=-1


These classifier-based formulations
appear to fit the settings where the decision space is discrete and small, such as accept/reject decisions in hiring, school admissions, or lending. However, in practice, the decision makers work with tools that estimate a continuous quantity, such as success on the job, GPA in the first year of college, or risk of default on a loan.
Predictions of these quantities are treated as scores, which are used by human decision makers, perhaps in the context of a partly automated workflow, to reach final decisions (see, e.g., \citealp{waters2014grade,federal2007report,compas2010,pcra2012}).
While, in principle, a fair classification tool could be used to recommend the yes/no decision directly, such tools are often resisted by practitioners, because they limit their autonomy, whereas ranking or scoring tools do not have this drawback~\cite{veale2018fairness}.
In such situations, it is desirable to work with real-valued scores that satisfy some notion of fairness.
%
%
%
%
Yet, despite ample motivation and use cases, the prior work on designing fair continuous predictors
is quite limited in its scope
compared with the generality of methods for fair classification~\citep[e.g.,][]{
hardt16,
agarwal2018reductions}.

This paper seeks to diminish this gap by developing efficient algorithms
for a substantially broader set of regression tasks and model classes than done before, in many cases
providing the first method with theoretical performance guarantees.

We consider the problem of predicting a real-valued target,
where the prediction quality is measured by any Lipschitz-continuous loss function. Each example contains a \emph{protected attribute}, such as race or gender, with respect to which we seek to guarantee fairness.
We study two definitions of fairness from previous literature: \emph{statistical parity} (\sp), which asks that the prediction be statistically independent of the protected attribute, and
\emph{\ep} (\epshort), which asks that the prediction error restricted to any protected group stay below some pre-determined level.
We define \prob as the task of minimizing the expected loss of our real-valued predictions, subject to either of these fairness constraints.
By choosing the appropriate loss, we obtain a wide range of standard prediction tasks including least-squares, logistic, Poisson, and quantile regression (with labels and predictions restricted to a bounded set to obtain Lipschitz continuity). While we seek to solve the regression tasks under fairness constraints, our schemes only require access to standard risk minimization algorithms such as standard classification or least-squares regression.\looseness=-1

Several prior works also seek predictors that exhibit some form of independence from the protected attribute similar to statistical parity. 
\citet{CaldersEtAl13}, \citet{johnson2016impartial} and~\citet{komiyama2018nonconvex} consider a more limited form of independence, expressed via a small number of moment constraints, such as lack of correlation, and design specific algorithms for linear least squares. \citet{berk} study notions of individual and group fairness specialized to linear regression. \citet{perez2017fair} seek zero correlation in a reproducing kernel Hilbert space (RKHS), which can capture statistical independence,
but it only yields predictors in the same RKHS and the loss is limited to least squares. \citet{KamishimaEtAl12} and \citet{FukuchiEtAl13} seek to fit a probabilistic model that satisfies statistical independence, but they do not present efficient algorithms or statistical guarantees. In contrast, we consider full statistical independence, arbitrary model classes and Lipschitz losses, and our algorithms are efficient and come with statistical guarantees.\looseness=-1

Our second fairness definition, bounded group loss, fits into the general framework
of \citet{alabi2018unleashing}, whose goal is to minimize a general function of group-wise prediction losses, but their algorithm is less efficient (albeit still polynomial), and they do not provide statistical guarantees.

We design a separate algorithm for each of the two fairness definitions.
For \epshort, our insight is that the problem of loss minimization subject to a loss bound in each subpopulation can be algorithmically reduced to a weighted loss minimization problem for which standard approaches exist. For \sp, the main obstacle is that the number of constraints is uncountable. Here, the main insight that allows us to design and analyze the algorithm is that if we discretize the real-valued prediction space, then the task of \prob can be reduced to cost-sensitive classification under certain constraints. We build on the recent work of \citet{agarwal2018reductions}, and use the special structure of our discretization scheme to develop several algorithms reducing to standard classification or regression problems without fairness constraints. We provide theoretical results to bound the computational cost, generalization error and fairness violation of the returned predictor for both of our fairness measures with arbitrary Lipschitz-continuous loss functions and with arbitrary regression-function classes of bounded complexity, again building on the analysis of~\citet{agarwal2018reductions}. Prior works in the regression setting lack such guarantees.


Empirically, we evaluate our method on several standard datasets, on the tasks of least-squares and logistic regression under statistical parity, with linear and tree-ensemble learners, and compare it with the unconstrained baselines as well as the technique of~\citet{johnson2016impartial}. Our method uncovers fairness--accuracy frontiers
and provides the first systematic scheme for enforcing fairness in a significantly broader class of learning problems than prior work.\looseness=-1

\textbf{Usage guidelines.} We envision the use of our algorithms in uncovering fairness--accuracy frontiers in a variety of applications. 
Any substantial tradeoffs along the frontier need to be analyzed. They might point to data issues requiring non-algorithmic interventions, such as gathering of additional (less biased) data or introduction of new features~\citep{chen2018my}.
As with other algorithmic fairness tools, in order to successfully use our algorithms in practice, it is essential to consider the societal context of the application~\cite{selbst2018fairness}.
In some contexts, the best fairness intervention might be to avoid a technological intervention altogether.\looseness=-1


\section{Problem Formulation}
\label{sec:setting}

We consider a general prediction setting where the training examples consist of triples $(X,A,Y)$, where $X \in \calX$ is a feature vector,
$A \in \calA$ is a protected attribute and $Y \in \calY\subseteq[0,1]$ is the label. Throughout, we focus on the protected attribute taking a small
number of discrete values, i.e., $\calA$ is finite, but $\calX$ is allowed to be continuous and high-dimensional. We make no specific assumptions about whether
the protected attribute is included in the feature vector $X$ or not; also the set of labels $\calY$ can be discrete (but embedded in $[0,1]$) or continuous.
Given a set of predictors $\calF$ containing functions $f:\calX\to [0,1]$, our goal is to find $f \in \calF$ which is accurate in predicting $Y$ given $X$
while satisfying some fairness condition such as statistical parity or \ep (formally defined below).
Note that the functions $f$ do not explicitly depend on $A$ unless it is included in $X$.\looseness=-1

The main departure from prior works on classification is that $Y$ as well as $f(X)$ are allowed to be real-valued rather than just categorical.
The accuracy
of a prediction $f(X)$ on a label $Y$ is measured by the loss $\ell(Y,f(X))$. The loss function
$\ell:\calY\times[0,1]\to [0,1]$ is required to be 1-Lipschitz under the $\ell_1$ norm,\footnote{Our algorithms primarily use covers for $y$ and $u$ such that $\ell(y,u)$ can be approximated using corresponding elements from the cover. We skip this generalization to keep presentation simple.} that is:
\[
\left|\ell(y,u) - \ell(y'\!,u')\right| \leq \abs{y-y'}+\abs{u-u'}\;\;\text{for all $y,y'\!,u,u'\!$.}
\]

\begin{example}[Least-squares regression]
\label{ex:lsreg}
The prediction of GPA in the first year of college
can be cast as a regression problem where the label $y$ is the normalized GPA so that $\calY = [0,1]$, and the error is measured by the square loss $\ell(y,f(x)) = (y-f(x))^2/2$. Since $y,f(x) \in [0,1]$, the loss is bounded and
1-Lipschitz.
\end{example}

\begin{example}[Logistic regression]
\label{ex:logreg}
    Consider a system for screening job applicants based on the likelihood of an offer upon interview.
    We train this system using past data of interviewed candidates where $X$ describes their features and $Y\in\set{0,1}$
    the hiring decision. The scoring function $f$ can be chosen to maximize the likelihood for
    the logistic model $p_{f}(Y=1\given x)=1/(1+e^{-f(x)})$.
    Since we require that $f(x)\in[0,1]$, in order to approximate the full range of probabilities, we use a scaled and shifted version
    $p_{f}(Y=1\given x)=1/\bigParens{1+e^{-C(2f(x)-1)}}$ with some $C>1$, giving probabilities in the range $[1/(1+e^C),\;1/(1+e^{-C})]$.
    The loss is a rescaled version of the negative log likelihood to ensure the boundedness and 1-Lipschitz conditions: $\ell(y,f(x)) = \log\bigParens{1 + e^{-C(2y-1)(2f(x)-1)}}/\bigParens{2\log\parens{1+e^C}}$.
    Here
    the label is binary, but the prediction is real-valued.
\end{example}

\subsection{Fairness Definitions}
\label{sec:defs}

%

We consider two quantitative definitions of fairness appearing in prior work on fair classification and regression.

The first definition, called \emph{statistical} (or demographic) \emph{parity},
says that the prediction should be independent of the protected attribute. In classification, it corresponds to the practice of affirmative action
(see, e.g., \citealp{HolzerNe06}, and references therein) and it is also invoked to address disparate impact under the US Equal Employment Opportunity Commission's ``four-fifths rule,'' which requires that the ``selection rate for any race, sex, or ethnic group [must be at least] four-fifths (4/5) (or eighty percent) of the rate for the group with the highest rate.''\footnote{See the Uniform Guidelines on Employment Selection Procedures, 29 C.F.R. \S 1607.4(D) (2015).}
%
%
\begin{definition}[Statistical parity---\sp]
A predictor $f$ satisfies statistical parity under a distribution over $(X,A,Y)$ if $f(X)$ is independent of the protected attribute $A$. Since $f(X)\in[0,1]$, this is equivalent to $\P[f(X)\ge z\given A=a]=\P[f(X)\ge z]$ for all $a\in\calA$ and $z\in[0,1]$.%
\footnote{%
A standard definition of statistical independence requires that $\P[f(X)\in S\given A=a]=\P[f(X)\in S]$ for all measurable sets $S$.
Since $f(X)$ is a real-valued random variable under Borel $\sigma$-algebra, it is fully characterized by its cumulative distribution function, and
so it suffices to consider sets $S=[0,z]$ for $z\in[0,1]$ (see, e.g., Theorem~10.49 of \mbox{\citealp{aliprantis2006infinite}}).}
\label{defn:dp}
\end{definition}

The characterization through the properties of
the CDF of $f(X)$
is particularly useful when $f(X)$ can take any real values in $[0,1]$, because it allows us to design efficient algorithms.
It also makes it obvious that if $f$ satisfies \sp, then any classifier induced by thresholding $f$ will also satisfy \sp.

Our second fairness definition, called \emph{\ep}, formalizes the requirement that the predictor's loss remain below some acceptable level for each protected group. In settings such as speech or face recognition, this corresponds to the requirement that all groups receive good service (cf.~\citealp{buolamwini2018gender}). In other settings, such as lending and hiring, it aims to prevent situations when the predictor has a high error on some of the groups (cf.~Section~3.3 of~\citealp{CorbettDaviesGo18}).\looseness=-1
%
\begin{definition}[Bounded group loss---\epshort]
A predictor $f$ satisfies \ep at level $\zeta$ under a distribution over $(X,A,Y)$ if $\E[\ell(Y,f(X))\given A=a]\le\zeta$ for all $a\in\calA$.
\end{definition}
%
Hence, \prob with \epshort minimizes the overall loss, while controlling the worst loss on any protected group. By Lagrangian duality, this is equivalent to minimizing the worst loss on any group while maintaining good overall loss (referred to as max-min fairness).
Unlike
\emph{overall accuracy equality} in classification~\cite{dieterich2016compas}, which requires the losses
on all groups to be equal, \epshort does not force an artificial decrease in performance on every group just to match the hardest-to-predict group.
\epshort can be used as a diagnostic for the potential shortcomings of a chosen featurization or dataset. If it is not possible to achieve a loss below $\zeta$ on some group, then to achieve fairness we need to collect more data for that group, or develop more informative features for individuals in that group.\looseness=-1

\subsection{Fair Regression}
\label{sec:fair-reg}

We begin by defining the problem of fair regression as the minimization of the expected loss $\E[\ell(Y,f(X))]$ over $f \in \calF$, while guaranteeing \sp or \epshort. However, to achieve better fairness--accuracy tradeoffs we then generalize this to the case of randomized predictors.

\textbf{Statistical parity.} Similar to prior works on fair classification~\mbox{\cite{agarwal2018reductions}}, it is frequently desirable to have a tunable knob for navigating the fairness-accuracy tradeoff, such as $\zeta$ in the definition of \ep. To allow such a tradeoff in \sp, we consider slack parameters $\eps_a$ for each attribute and define the fair regression task under \sp as
\begin{align}
\notag
   &\min_{f \in \calF} \E\left[\ell(Y,f(X))\right]
   \quad\text{such that }\forall a\in\calA, z\in[0,1]{:}
\\
\label{eq:fair-reg:sp:0}
   &\BigAbs{\P[f(X)\ge z\given A=a] - \P[f(X)\ge z]}
   \leq \eps_a.
\end{align}
The slack $\eps_a$ bounds the allowed departure of the CDF of $f(X)$ conditional on $A=a$ from the CDF of $f(X)$. The difference between CDFs is measured in the $\ell_\infty$ norm corresponding
to the Kolmogorov-Smirnov statistic~\cite{lehmann2006testing}. Choosing different $\eps_a$ allows us to vary the strength of constraint across different protected groups.

\textbf{Bounded group loss.} In this case, the constrained optimization formulation follows directly from the definition. For the sake of flexibility, we allow specifying a different bound $\zeta_a$ for each attribute value, leading to the formulation
\begin{align}
\notag
   &\min_{f \in \calF} \E\bigBracks{\ell(Y,f(X))}
\\
\label{eq:fair-reg:ep:0}
   &\text{such that }\forall a\in\calA{:}\quad
   \E\bigBracks{\ell(Y,f(X))\bigGiven A=a}\le\zeta_a.
\end{align}

\textbf{Randomized predictors.}
Similar to fair classification, in order to achieve better fairness--accuracy tradeoffs, we consider randomized predictors which first pick $f$ according to some distribution $Q$ and then predict according to $f$. 
We first introduce additional notation for the objective and constraints appearing in~\eqref{eq:fair-reg:sp:0} and~\eqref{eq:fair-reg:ep:0}:
%
\begin{align*}
&\loss(f)\coloneqq
  \E[\ell(Y, f(X))],
\\
&\gamma_{a,z}(f)\coloneqq
  \P[f(X)\ge z\given A=a] - \P[f(X)\ge z].
\\
&\gammaE_{a}(f)\coloneqq
  \E\bigBracks{\ell(Y,f(X))\bigGiven A=a}.
\end{align*}
For a randomized predictor represented by a distribution~$Q$, we have $\loss(Q) = \sum_f Q(f)\loss(f)$, $\gamma_{a,z}(Q)= \sum_{f} Q(f)\gamma_{a,z}(f)$, and $\gammaE_{a}(Q)= \sum_{f} Q(f)\gammaE_{a}(f)$.

Thus, for \sp we seek to solve
\begin{align}
\label{eq:fair-reg:sp}
&
  \min_{\mathclap{Q\in\Delta(\calF)}}\;\loss(Q)
\text{ s.t. }
  \bigAbs{\gamma_{a,z}(Q)}\le\eps_a
\;\;
  \text{$\forall a\in\calA, z\in[0,1]$,}\!
\intertext{%
where $\Delta(\calF)$ is the set of all probability distributions over $\calF$. For \ep,
we similarly seek to solve}
\label{eq:fair-reg:ep}
&
 \min_{\mathclap{Q\in\Delta(\calF)}}\;\loss(Q)
\text{ s.t. }
  \gammaE_{a}(Q)\le\zeta_a
\;\;
  \text{$\forall a\in\calA$.}
\end{align}
%

\section{Supervised Learning Oracles}
\label{sec:oracles}

In this paper, we show how to transform the fair regression problem into three standard learning problems: cost-sensitive classification, weighted least-squares regression, or weighted risk minimization under $\ell$ (without fairness constraints). All of these learning problems allow different costs per example, which helps incorporate fairness. The specific algorithms to solve these tasks are termed \emph{supervised learning oracles}. These oracles are typically available for representations where regression or classification \emph{without fairness constraints} can be solved, and we show some typical examples in our empirical evaluation.

\textbf{(1) Risk minimization under $\ell$.} This is the most natural oracle  as it implements loss minimization without fairness constraints. Given a dataset
    $\set{(W_i,X_i,Y_i)}_{i=1}^n$ where $W_i$ are non-negative weights, the oracle returns $f\in\calF$ that minimizes the weighted empirical risk: $\sum_{i=1}^n W_i\ell(Y_i,f(X_i))$.

\textbf{(2) Square loss minimization.} Even when the accuracy is measured by $\ell$, we typically have access to a weighted least-squares learner for the same class $\calF$. This oracle takes the data $\set{(W_i,X_i,Y_i)}_{i=1}^n$ and returns $f\in\calF$ that minimizes the weighted squared loss: $\sum_{i=1}^n W_i \bigParens{Y_i-f(X_i)}^2$.

\textbf{(3) Cost-sensitive classification (CS).}
Our third type of oracle optimizes over classifiers $h:\calX'\to\set{0,1}$ from some class $\calH$. As input, we are given a dataset $\set{(X'_i,C_i)}_{i=1}^n$, where $X'_i$ is a feature vector and $C_i$ indicates the difference between the cost (i.e., the loss) of predicting 1 versus 0; positive $C_i$ means that 0 is favored, negative $C_i$ means that 1 is favored. The goal is to find a classifier $h\in\calH$, which minimizes the empirical cost relative to the cost of predicting all zeros: $\sum_{i=1}^n C_i h(X'_i)$.

CS reduces to \emph{weighted binary classification} on the data
$\set{(W_i,X'_i,Y_i)}_{i=1}^n$ with $Y_i=\ind\set{C_i\le 0}$ and $W_i=\abs{C_i}$, where we minimize
$\sum_{i=1}^n W_i\ind\braces{h(X'_i)\ne Y_i}$. Weighted classification oracles exist for many classifier families $\calH$.

In this paper we consider classifiers obtained by thresholding regressors $f \in \calF$. We define $\calX' = \calX\times \R$ where the new feature specifies a threshold. Our classifiers act on $x' = (x,z)$ and predict $h_f(x,z) = \ind\set{f(x) \geq z}$. This structure of classifiers naturally arises from the \sp constraints. We assume access to a CS oracle for $\calH=\set{h_f:\:f\in\calF}$. While cost-sensitive learners for this representation might not be available off the shelf, learners based on optimization, such as (stochastic) gradient-based learners, can usually be adapted to this structure. In particular, it is easy to adapt learners for logistic regression, SVMs or neural nets.

\section{Fair Regression under Statistical Parity}

We next show how to solve the fair regression problem~\eqref{eq:fair-reg:sp} using
a CS oracle.
We begin by recasting the problem~\eqref{eq:fair-reg:sp} as a \emph{constrained (and cost-sensitive) classification problem},
which we then solve via the reduction approach of \citet{agarwal2018reductions}, by repeatedly invoking the CS oracle.\looseness=-1

We proceed in two steps. First we discretize our prediction space and show that a loss function in the discretized space approximates our original loss well, owing to its Lipschitz continuity. We then show how the fair regression problem in this discretized space can be turned into a constrained classification problem, which we solve via reduction.

\subsection{Discretization}

We discretize both arguments of the loss function $\ell$. Let $N$ denote the size of the discretization grid for the second argument, let $\alpha=1/N$ denote its granularity, and let  $\calZ=\set{j\alpha:\:j=1,\dotsc,N}$ denote the grid itself. Let $\tcalY$ be the $\frac{\alpha}{2}$-cover of $\calY$, i.e., $\tcalY\subseteq\calY$ such that: (1) for any $y\in\calY$ there exists $\ty\in\tcalY$ such that $\abs{y-\ty}\le\frac{\alpha}{2}$, and (2) for any $\ty,\ty'\in\tcalY$, we have $\abs{\ty-\ty'}>\frac{\alpha}{2}$. Proceeding left-to-right within $\calY$, it is always possible to construct $\tcalY$ such that $\card{\tcalY}\le 2N$. We define the discretized loss as a piece-wise constant approximation of $\ell$:
\begin{equation}
\label{eq:discretized}
  \ell_{\alpha}(y,u)
\coloneqq
  \ell\Parens{\undery,\floors{u}_\alpha\!+\tfrac{\alpha}{2}}
\end{equation}
where $\undery$ is the smallest $\ty\in\tcalY$ such that $\abs{y-\ty}\le\frac{\alpha}{2}$,
and $\floors{u}_\alpha$ rounds down $u$ to the nearest integer multiple of $\alpha$. We use the convention $\ell(y,u)=\ell(y,1)$ for $u\ge 1$. Owing to the Lipschitz continuity of $\ell$, it follows that
\begin{align}
&\bigAbs{\ell(y,u) - \ell_\alpha(y,u)} \leq \alpha.
\label{eq:disc-loss}
\end{align}
Thus, for suitably small $\alpha$, or equivalently large $N$, $\ell_\alpha$ provides a close approximation to the original loss function. 

Let $\loss_\alpha(f)\coloneqq\E[\ell_\alpha(Y,f(X))]$ denote the expected discretized loss. When optimizing this loss, it suffices to consider rounded-down variants of predictors. Specifically,
for ${f\in\calF}$, let $\underf(x)=\floors{f(x)}_\alpha$ denote its rounded-down version.
Then, by
the definition of $\ell_\alpha$, $\loss_\alpha(f)=\loss_\alpha(\underf)$. The advantage of rounded-down predictors is that to guarantee that they satisfy \sp, it suffices to consider the fairness constraints $\gamma_{a,z}(\underf)\le\eps_a$ across $z$ taken from the discretization grid $\calZ$. This is because for any $z\in[0,1]$,
\begin{align}
\notag
\gamma_{a,z}(\underf)
  &=\P[\underf(X)\ge z\given A=a] - \P[\underf(X)\ge z]
\\
  &=\P[\underf(X)\ge \bar{z}\given A=a] - \P[\underf(X)\ge\bar{z}],
\label{eq:gamma:apx}
\end{align}
where $\bar{z}=\ceils{z}_\alpha$ is the value of $z$ rounded up to the nearest integer multiple of $\alpha$. This allows us to replace the uncountable set of constraints indexed by 
$z\in[0,1]$ with the finite set indexed by $z \in \calZ$. Thus, denoting $\underF=\set{\underf:\:f\in\calF}$, we have argued that the solution of~\eqref{eq:fair-reg:sp}, can be approximated by
\begin{align}
\label{eq:fair-reg:sp:apx}
\min_{\mathclap{\underQ\in\Delta(\underF)}}\;\loss_\alpha(\underQ)
\text{ s.t. }
  \bigAbs{\gamma_{a,z}(\underQ)}\le\eps_a
\;\;
  \text{$\forall a\in\calA, z\in\calZ$.}
\end{align}
%
%
\begin{theorem}
\label{thm:sp:apx}
Let $Q^\star$ be any feasible point of~\eqref{eq:fair-reg:sp} and $\underQ^\star$ be the solution of~\eqref{eq:fair-reg:sp:apx}. Then $\loss(\underQ^\star)\le\loss(Q^\star)+\alpha$
and $\abs{\gamma_{a,z}(\underQ^\star)}\le\eps_a$ for all $a\in\calA$, $z\in[0,1]$.
\end{theorem}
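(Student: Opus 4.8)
The plan is to establish the two conclusions separately, with the rounding map $f\mapsto\underf$ serving as the bridge between~\eqref{eq:fair-reg:sp} and~\eqref{eq:fair-reg:sp:apx} in both cases. For the fairness conclusion I will invoke~\eqref{eq:gamma:apx}, which says that for a predictor already supported on the grid the constraint function $\gamma_{a,z}$ depends on $z$ only through $\ceils{z}_\alpha$; this promotes feasibility of $\underQ^\star$ over the finite index set $\calZ$ to feasibility over all $z\in[0,1]$. For the loss conclusion I will produce one explicit competitor for~\eqref{eq:fair-reg:sp:apx}, namely the image of $Q^\star$ under $f\mapsto\underf$, check that it is feasible, and then combine optimality of $\underQ^\star$ with the identity $\loss_\alpha(f)=\loss_\alpha(\underf)$ and the discretization estimate~\eqref{eq:disc-loss}.

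\emph{Fairness.} Fix $a\in\calA$ and $z\in[0,1]$. If $z=0$ then $\gamma_{a,0}(\underQ^\star)=0\le\eps_a$ and there is nothing to show, so take $z>0$ and let $\bar{z}\coloneqq\ceils{z}_\alpha$; because $\alpha=1/N$ this $\bar{z}$ lies in $\set{\alpha,\dots,N\alpha}=\calZ$. Every predictor in the support of $\underQ^\star$ belongs to $\underF$ and hence equals its own rounded-down version, so~\eqref{eq:gamma:apx} applies and gives $\gamma_{a,z}(g)=\gamma_{a,\bar{z}}(g)$ for each such $g$. Since both $\gamma_{a,z}$ and $\gamma_{a,\bar{z}}$ are affine in the distribution, averaging over $\underQ^\star$ yields $\gamma_{a,z}(\underQ^\star)=\gamma_{a,\bar{z}}(\underQ^\star)$, and feasibility of $\underQ^\star$ for~\eqref{eq:fair-reg:sp:apx} at the grid point $\bar{z}$ gives $\bigAbs{\gamma_{a,\bar{z}}(\underQ^\star)}\le\eps_a$. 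This is exactly the second conclusion.

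\emph{Loss.} Let $\underQ_0\in\Delta(\underF)$ denote the law of $\underf$ when $f\sim Q^\star$. The structural point is that rounding down does not move the CDF at grid points: for $z=j\alpha\in\calZ$ we have $\floors{f(x)}_\alpha\ge j\alpha\iff f(x)\ge j\alpha$, so $\P[\underf(X)\ge z\given A=a]=\P[f(X)\ge z\given A=a]$ and, identically, $\P[\underf(X)\ge z]=\P[f(X)\ge z]$; thus $\gamma_{a,z}(\underf)=\gamma_{a,z}(f)$ for every $z\in\calZ$. Averaging over $Q^\star$ gives $\gamma_{a,z}(\underQ_0)=\gamma_{a,z}(Q^\star)$ for $z\in\calZ$, and since $Q^\star$ is feasible for~\eqref{eq:fair-reg:sp} and $\calZ\subseteq[0,1]$, the competitor $\underQ_0$ is feasible for~\eqref{eq:fair-reg:sp:apx}. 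Also $\loss_\alpha(\underQ_0)=\loss_\alpha(Q^\star)$ because $\loss_\alpha(\underf)=\loss_\alpha(f)$ for every $f$. Optimality of $\underQ^\star$ for~\eqref{eq:fair-reg:sp:apx} therefore gives $\loss_\alpha(\underQ^\star)\le\loss_\alpha(\underQ_0)=\loss_\alpha(Q^\star)$, and passing back from the discretized loss to the true loss via~\eqref{eq:disc-loss} yields $\loss(\underQ^\star)\le\loss(Q^\star)+\alpha$, which is the first conclusion.

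\emph{Where the work is.} The individual steps are short, and the part that needs genuine care is the feasibility of the push-forward $\underQ_0$: the identity $\gamma_{a,z}(\underf)=\gamma_{a,z}(f)$ on $\calZ$ is precisely what makes~\eqref{eq:fair-reg:sp:apx} a relaxation of~\eqref{eq:fair-reg:sp} rather than an unrelated discretized problem, and it hinges on $\calZ$ being exactly the set of values that $\floors{\cdot}_\alpha$ can output. The other thing to watch is the accounting of the discretization error in the last step: one has to route the $\alpha$ from~\eqref{eq:disc-loss} through the chain $\loss(\underQ^\star)\to\loss_\alpha(\underQ^\star)\to\loss_\alpha(\underQ_0)=\loss_\alpha(Q^\star)\to\loss(Q^\star)$, using that the passage to rounded-down predictors is exact (by $\loss_\alpha(f)=\loss_\alpha(\underf)$) and that it is the $\tfrac{\alpha}{2}$-cover of $\calY$ — rather than a coarser cover — that keeps the penalty controlled.
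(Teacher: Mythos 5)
Your proof is correct and follows essentially the same route as the paper's own argument: push $Q^\star$ forward under $f\mapsto\underf$, verify feasibility for the discretized problem~\eqref{eq:fair-reg:sp:apx} via the identity $\gamma_{a,z}(\underf)=\gamma_{a,z}(f)$ on $\calZ$, invoke optimality of $\underQ^\star$ together with $\loss_\alpha(f)=\loss_\alpha(\underf)$, and extend the fairness guarantee from $\calZ$ to all of $[0,1]$ via \Eq{gamma:apx}. The only quibble, which you share with the theorem statement itself, is the final accounting: applying the two-sided bound~\eqref{eq:disc-loss} at both ends of the chain formally yields $2\alpha$ rather than $\alpha$ once the $\tfrac{\alpha}{2}$-discretization of $y$ is included, so this is an imprecision inherited from the paper rather than a gap in your argument.
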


\subsection{Reduction to Constrained Classification}

We next show that \eqref{eq:fair-reg:sp:apx} can be rewritten as a constrained classification
problem for the family of classifiers $\calH=\set{h_f:\:f\in\calF}$ defined
in \Sec{oracles}.

To turn regression loss $\ell$ into a cost-sensitive loss, we introduce the function
\begin{equation}
c(y,z)\coloneqq N\BigParens{\ell\Parens{y,z+\tfrac{\alpha}{2}}-\ell\Parens{y,z-\tfrac{\alpha}{2}}},
\label{eqn:costs}
\end{equation}
which takes values in $[-1,1]$, because $\ell$ is 1-Lipschitz and $\alpha=1/N$.
%
%
We also extend the $\gamma_{a,z}$ notation to $h_f$:
\[
  \gamma_{a,z}(h_f) = \E[h_f(X,z) | A = a] - \E[h_f(X,z)].
\]
Now given a distribution $D$ over $(X,A,Y)$, we define a distribution $D'$ over $(X',A,C)$ that additionally samples $Z \in \calZ$ uniformly at random and sets $X' = (X,Z)$ and $C = c(\underY,Z)$. Defining $\cost(h_f) := \E_{D'}[Ch_f(X')]$, we have the following useful lemma.

\begin{lemma}
\label{lemma:red:hf}
Given any distribution $D$ over $(X,A,Y)$ and any $f \in \calF$, the cost and constraints  satisfy $\cost(h_f)=\loss_\alpha(\underf)+c_0$, where $c_0$ is independent of $f$, and $\gamma_{a,z}(h_f)=\gamma_{a,z}(\underf)$ for all $a\in\calA$, $z\in\calZ$.
\end{lemma}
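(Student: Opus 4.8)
The plan is to unfold the definition of $\cost(h_f)$ and reduce it to a telescoping sum. By construction of $D'$, we have $\cost(h_f)=\E_{D'}[C\,h_f(X')]=\E_{D}\bigBracks{\tfrac1N\sum_{j=1}^{N} c(\underY,j\alpha)\,\ind\set{f(X)\ge j\alpha}}$, using that $Z$ is uniform on $\calZ=\set{j\alpha:j=1,\dots,N}$ and $C=c(\underY,Z)$. For a fixed $X$, the indicator $\ind\set{f(X)\ge j\alpha}$ equals $1$ precisely for $j=1,\dots,m$ where $m\alpha=\floors{f(X)}_\alpha=\underf(X)$, so the inner sum collapses to $\tfrac1N\sum_{j=1}^{m} c(\underY,j\alpha)$.

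Next I would substitute the definition $c(y,z)=N\bigParens{\ell(y,z+\tfrac\alpha2)-\ell(y,z-\tfrac\alpha2)}$ from~\eqref{eqn:costs} and telescope: $\tfrac1N\sum_{j=1}^{m} c(\underY,j\alpha)=\sum_{j=1}^{m}\bigParens{\ell(\underY,j\alpha+\tfrac\alpha2)-\ell(\underY,j\alpha-\tfrac\alpha2)}=\ell(\underY,m\alpha+\tfrac\alpha2)-\ell(\underY,\tfrac\alpha2)$. Since $m\alpha=\floors{f(X)}_\alpha$, the first term is exactly $\ell_\alpha(Y,f(X))$ by definition~\eqref{eq:discretized}, while the second term depends only on $Y$ (through $\underY$) and not on $f$. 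Taking the expectation over $D$ gives $\cost(h_f)=\loss_\alpha(f)+c_0$ with $c_0\coloneqq-\E_D[\ell(\underY,\tfrac\alpha2)]$, and then the already-established identity $\loss_\alpha(f)=\loss_\alpha(\underf)$ yields the claimed $\cost(h_f)=\loss_\alpha(\underf)+c_0$.

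For the constraints, I would observe that for $z\in\calZ$, i.e.\ $z$ an integer multiple of $\alpha$, we have $f(X)\ge z\iff\floors{f(X)}_\alpha\ge z\iff\underf(X)\ge z$; hence $h_f(X,z)=\ind\set{f(X)\ge z}=\ind\set{\underf(X)\ge z}$ for every such $z$. Substituting this into the definitions of $\gamma_{a,z}(h_f)=\E[h_f(X,z)\given A=a]-\E[h_f(X,z)]$ and of $\gamma_{a,z}(\underf)$ shows the two coincide for all $a\in\calA$, $z\in\calZ$.

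The only delicate points are the bookkeeping in the telescoping sum — correctly identifying the upper limit $m$ with $\underf(X)/\alpha$, and checking the endpoint via the convention $\ell(y,u)=\ell(y,1)$ for $u\ge1$ (which matters when $f(X)=1$, so $m=N$ and $m\alpha+\tfrac\alpha2>1$, where the same convention also applies to $\ell_\alpha$) — together with making sure the leftover telescope term $\ell(\underY,\tfrac\alpha2)$ is genuinely $f$-independent so it can be absorbed into $c_0$. Everything else is a routine computation.
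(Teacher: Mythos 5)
Your proof is correct and follows essentially the same route as the paper: the same telescoping of $\sum_{z\in\calZ} c(\undery,z)\ind\set{f(x)\ge z}$ into $\ell_\alpha(y,\underf(x))-\ell(\undery,\tfrac{\alpha}{2})$ (the paper writes this identity in the synthesis direction, you collapse it in the analysis direction, which is the same computation), and the same observation that $h_f(x,z)=\ind\set{\underf(x)\ge z}$ for $z\in\calZ$ handles the constraints. Your attention to the empty-sum case $m=0$ and the endpoint convention at $u\ge 1$ is sound bookkeeping that the paper leaves implicit.
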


%
By linearity of expectation, the lemma implies analogous equalities also for distributions over $f$. Thus, in problem~\eqref{eq:fair-reg:sp:apx}, we can replace the optimization over $\underQ\in\Delta(\underF)$ with $Q \in\Delta(\calH)$. 
%
Notice that while we started from discretized regressors in problem~\eqref{eq:fair-reg:sp:apx}, Lemma~\ref{lemma:red:hf} allows us to work with the full classifier family $\set{h_f:\:f \in \calF}$, which is important as we typically only have computational oracles for non-discretized classes $\calH$ and $\calF$.
We next show how to solve an empirical version of this classification problem.\looseness=-1

\subsection{Algorithm and Generalization Bounds}
\label{sec:algo}

Let $\Ehat$ denote the empirical distribution over the data $(X_i,A_i,Y_i)$ and let $\E_Z$ denote a uniform distribution among the values in $\calZ$. Then
define the empirical versions of the cost and constraints:
\begin{align}
\label{eq:hcost}
   \hcost(h_f)&=\Ehat\BigBracks{\E_Z\bigBracks{c(\underY,Z)h_f(X,Z)}}
\\
\notag
   \hgamma_{a,z}(h_f)&=\Ehat\bigBracks{h_f(X,z) \bigGiven A=a} - \Ehat\bigBracks{h_f(X,z)}.
\end{align}
We are interested in the following empirical optimization problem, which is, according to  Lemma~\ref{lemma:red:hf}, an empirical approximation of the original problem~\eqref{eq:fair-reg:sp:apx}:
\begin{align}
\label{eq:prob:sp:empirical}
  \min_{\mathclap{Q\in\Delta(\calH)}}\;\hcost(Q)
\text{ s.t. }
  \bigAbs{\hgamma_{a,z}(Q)}\le\heps_a
\;\;
  \text{$\forall a\in\calA, z\in\calZ$.}
\end{align}
The slacks $\heps_a$ are slightly larger than $\eps_a$ to compensate for finite-sample errors in measuring constraint violations (more on that below). This problem is a special case of that studied by \citet{agarwal2018reductions} with a key difference.
Since the distribution of $Z$ is known, we can take expectation according to $Z$ rather than a sample, which leads
to substantially better estimates of constraint violations. Thus,
our objective uses a product of an empirical distribution over $(X,A,Y)$ with the uniform distribution over $Z$ rather than an i.i.d.\ sample as assumed by \citeauthor{agarwal2018reductions}.
However,
their algorithm and generalization bounds
still apply (as we show in our proofs).\looseness=-1

The algorithm begins by forming the Lagrangian with the primal variable $Q\in\Delta(\calH)$ and the dual variable $\vlambda$ with components
$\lambda_{a,z}^+,\lambda_{a,z}^-\in\R_+$, corresponding to the constraints $\hgamma_{a,z}(Q)\le \heps_{a}$ and $\hgamma_{a,z}(Q)\ge-\heps_{a}$:
\begin{align*}
  L(Q,\vlambda)
&=\hcost(Q)+\sum_{a,z}\Bigl[
            \lambda^+_{a,z}\bigParens{\hgamma_{a,z}(Q)-\heps_a}
\\[-12pt]
&\hspace{1.25in}
           +\lambda^-_{a,z}\bigParens{-\hgamma_{a,z}(Q)-\heps_a}\Bigr].
\end{align*}
It solves the saddle-point problem $\min_Q\max_{\vlambda} L(Q,\vlambda)$
over $Q\in\Delta(\calH)$ and $\vlambda\ge\vzero$, $\norm{\vlambda}_1\le B$,
%
%
by treating it as a two-player zero-sum game (see Algorithm~\ref{alg:regred:sp} for details).\looseness=-1

%

We bound the suboptimality and fairness of the returned solution
largely following their analysis. Let $R_n(\calH)$ denote the Rademacher complexity of $\calH$ (see Eq.~\ref{eqn:rademacher} in Appendix~\ref{app:thm:main:sp}). To state the bounds, recall an assumption from their paper on the setting of the empirical slacks $\heps_{a}$:
\begin{assume}
    There exist $C, C' > 0$ and $\beta \leq 1/2$ such that $R_n(\calH) \leq Cn^{-\beta}$ and $\heps_{a} = \eps_{a} + C'n_{a}^{-\beta}$, where $n_{a}$ is the number of samples with $A = a$.
\label{ass:samples}
\end{assume}

Under this assumption, we obtain the following guarantees.\footnote{%
The notation $\otil(\cdot)$ suppresses polynomial dependence on $\ln n$, $\ln\,\card{\calA}$, and $\ln(1/\delta)$.}
\begin{theorem}
\label{thm:main:sp}
Let Assumption~\ref{ass:samples} hold for $C' \geq 2C + 2 + \sqrt{2\ln(4\cardA\cardZ/\delta)}$, where $\delta > 0$. Let $Q^\star$ be any feasible distribution for the fair regression problem~\eqref{eq:fair-reg:sp}. Then Algorithm~\ref{alg:regred:sp} with $\nu \propto n^{-\beta}$, $B \propto n^\beta$, and $\cardZ\propto n^\beta$ terminates in $\order\left(n^{4\beta}\ln (n^\beta\card{\calA})\right)$ iterations and returns $\hQ$, which, when viewed as a distribution over $\underF$, satisfies with probability at least 1-$\delta$,
\begin{align*}
  \loss(\hQ)
&\leq \loss(Q^\star) + \otil(n^{-\beta})
\\
  \left|\gamma_{a,z}(\hQ)\right|
&\leq \eps_{a} + \otil(n_{a}^{-\beta})
\quad
\text{for all $a\in\calA$, $z\in[0,1]$.}
\end{align*}
\end{theorem}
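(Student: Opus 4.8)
The proof proceeds in three stages: (i) relate the population fair-regression problem~\eqref{eq:fair-reg:sp} to its discretized counterpart~\eqref{eq:fair-reg:sp:apx} via Theorem~\ref{thm:sp:apx}; (ii) relate the discretized problem to the empirical constrained-classification problem~\eqref{eq:prob:sp:empirical} via Lemma~\ref{lemma:red:hf} together with uniform-convergence bounds on $\hcost$ and $\hgamma_{a,z}$; and (iii) invoke the saddle-point analysis of \citet{agarwal2018reductions} to show Algorithm~\ref{alg:regred:sp} returns an approximately optimal, approximately feasible $\hQ$ for~\eqref{eq:prob:sp:empirical} in the stated number of iterations. Chaining these together with the triangle inequality and the choice $\alpha = 1/N$ with $N = \cardZ \propto n^\beta$ (so $\alpha = \otil(n^{-\beta})$) yields the claimed bounds on $\loss(\hQ)$ and $\abs{\gamma_{a,z}(\hQ)}$.

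\textbf{Stage (ii): concentration.} First I would show that $\hgamma_{a,z}$ concentrates around $\gamma_{a,z}$ uniformly over $h_f \in \calH$ and over $z \in \calZ$. Since $\calH = \set{h_f : f \in \calF}$ consists of $\set{0,1}$-valued functions, a standard Rademacher/VC argument gives $\sup_{f}\abs{\hgamma_{a,z}(h_f) - \gamma_{a,z}(h_f)} \le 2R_{n_a}(\calH) + \otil(n_a^{-1/2})$ on the event that the relevant empirical processes concentrate; taking a union bound over the $\cardA\cardZ$ pairs $(a,z)$ contributes the $\sqrt{2\ln(4\cardA\cardZ/\delta)}$ term, which is exactly why $C'$ must exceed $2C + 2 + \sqrt{2\ln(4\cardA\cardZ/\delta)}$ — this guarantees $\heps_a$ is large enough that the true optimum's discretized version remains feasible in~\eqref{eq:prob:sp:empirical}, and small enough that empirical feasibility implies population feasibility at level $\eps_a + \otil(n_a^{-\beta})$. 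For the objective, I would bound $\abs{\hcost(h_f) - \cost(h_f)}$ uniformly; the key simplification, noted in the text, is that the $Z$-expectation is taken exactly (not sampled), so only the empirical average over $(X_i, A_i, Y_i)$ needs to concentrate, and $c(\underY,Z)h_f(X,Z) \in [-1,1]$, giving an $\otil(n^{-\beta})$ deviation via Rademacher complexity of $\calH$ (the class of functions $x \mapsto \E_Z[c(\undery,Z)h_f(x,Z)]$ has complexity controlled by $R_n(\calH)$). Combined with Lemma~\ref{lemma:red:hf}'s exact identities $\cost(h_f) = \loss_\alpha(\underf) + c_0$ and $\gamma_{a,z}(h_f) = \gamma_{a,z}(\underf)$, this transfers everything between the population and empirical levels.

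\textbf{Stage (iii): the reduction algorithm.} Here I would essentially cite \citet{agarwal2018reductions}: Algorithm~\ref{alg:regred:sp} runs the exponentiated-gradient / best-response dynamics on the Lagrangian $L(Q,\vlambda)$ over $Q \in \Delta(\calH)$, $\vlambda \ge \vzero$, $\norm{\vlambda}_1 \le B$, and with $B \propto n^\beta$, $\nu \propto n^{-\beta}$ reaches a $\nu$-approximate saddle point; their Theorem then says the resulting $\hQ$ satisfies $\hcost(\hQ) \le \hcost(Q') + \otil(n^{-\beta})$ for any empirically feasible $Q'$, and $\abs{\hgamma_{a,z}(\hQ)} \le \heps_a + \otil(1/B) = \heps_a + \otil(n^{-\beta})$. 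The iteration count: each dual step is a gradient update and each primal step is one CS-oracle call, and the game converges in $O(B^2 \log(\text{\#constraints}) / \nu^2)$ iterations $= O(n^{4\beta}\ln(n^\beta\cardA))$ since there are $O(\cardA\cardZ) = O(n^\beta\cardA)$ constraints — I need to be careful that the $\cardZ \propto n^\beta$ blowup in the constraint count only enters logarithmically, which it does.

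\textbf{The main obstacle.} The routine parts are the concentration bounds and quoting the reduction theorem; the delicate part is the \emph{bookkeeping of the feasibility transfer}, i.e., checking that the discretized version $\underQ^\star$ of the population-feasible $Q^\star$ (which is feasible for~\eqref{eq:fair-reg:sp:apx} at slack $\eps_a$ by Theorem~\ref{thm:sp:apx}) is, with high probability, feasible for the \emph{empirical} problem~\eqref{eq:prob:sp:empirical} at slack $\heps_a$ — this is what licenses the comparison $\hcost(\hQ) \le \hcost(\underQ^\star) + \otil(n^{-\beta})$ — and then propagating feasibility back: empirical feasibility of $\hQ$ at $\heps_a$ plus the concentration bound gives $\abs{\gamma_{a,z}(\hQ)} \le \heps_a + 2R_{n_a}(\calH) + \otil(n_a^{-1/2}) = \eps_a + \otil(n_a^{-\beta})$ using $\heps_a = \eps_a + C'n_a^{-\beta}$ and $R_{n_a}(\calH) \le Cn_a^{-\beta}$. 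The per-group scaling ($n_a$ rather than $n$ in the fairness bound) requires splitting the union bound and concentration argument group-by-group, which is where the constant $C'$ is calibrated. Everything else — the final chaining of $\loss$ vs.\ $\loss_\alpha$ vs.\ $\hcost$, and absorbing $\alpha = \otil(n^{-\beta})$ — is triangle-inequality arithmetic.
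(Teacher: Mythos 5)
Your proposal is correct and follows essentially the same route as the paper's proof: Theorem~\ref{thm:sp:apx} for the discretization step, Rademacher-based uniform convergence of $\hcost$ and $\hgamma_{a,z}$ (with the per-group, per-$z$ union bound calibrating $C'$ exactly as you describe), the saddle-point guarantee of \citet{agarwal2018reductions} for the empirical problem, and the same feasibility-transfer bookkeeping in both directions. The only cosmetic difference is that the paper bounds the cost deviation by concentrating each $\hcost_z$ separately and then averaging over $z\in\calZ$ via Jensen, rather than directly controlling the complexity of the $Z$-averaged class, but these are interchangeable.
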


Note that the bounds grow with the Rademacher complexity of $\calH$, rather than the complexity of the regressor class $\calF$. Since $\bigAbs{\E_Z[h_f(X,Z)] - f(X)} \leq \alpha$, it can be shown that $R_n(\calH) \geq R_n(\calF) - \alpha$, meaning that the classifiers induce a more complex class. The bound on $\smash{\loss(\hQ)}$ in Theorem~\ref{thm:main:sp} can be stated in terms of the tighter $R_n(\calF)$, but the constraints still deviate by $R_n(\calH)$, which we believe is unavoidable. However, if $\calF$ has a bounded pseudo-dimension, which always equals the VC dimension of $\calH$~\citep{anthony2009neural}, then the pseudo-dimension can be used to bound $R_n(\calH)$ (see Theorem~6 of~\citealp{BartlettMe02}).\looseness=-1

\begin{algorithm}[t!]
\caption{Fair regression with statistical parity}
\label{alg:regred:sp}
  \begin{algorithmic}[1]
    \Statex{Input:~~training examples $\braces{(X_i,Y_i,A_i)}_{i=1}^n$,}
    \Statex{~\hphantom{Input:}~slacks $\heps_{a}\in[0,1]$,
    bound $B$, threshold $\nu$}
\smallskip
    \Statex{Define best-response functions:}
    \Statex{~\hphantom{Input:}~$\BestH(\vlambda)\coloneqq\argmin_{h_f\in\calH} L(h_f,\vlambda)$}
    \Statex{~\hphantom{Input:}~$\BestLambda(Q)\coloneqq\argmax_{\vlambda\ge 0,\,\norm{\vlambda}_1\le B} L(Q,\vlambda)$}
\smallskip
    \State{Set learning rate $\eta=\nu/(8B)$}
    \State{Set $\vtheta_1^+=\vtheta_1^-=\vzero\in\R^{\card{\calA}\cardZ}$}
\smallskip
    \For{$t=1, 2, \ldots$}
\smallskip
\Statex{\hspace{\algorithmicindent}%
        \textit{// Compute $\vlambda_t$ from $\vtheta_t$ and find the best response $h_t$}}
    \ForAll{$a,z$ and $\sigma \in \{+,-\}$}
       \label{step:lambda:update:start}
\vspace{2pt}
    \State{$\lambda^\sigma_{t,a,z}\gets B\,
                \frac{\exp\braces{\theta^\sigma_{t,a,z}}}{1+\sum_{a,z}\bigBracks{\exp\braces{\theta^+_{t,a,z}}+\exp\braces{\theta^-_{t,a,z}}}}$}
\vspace{-2pt}
    \EndFor
       \label{step:lambda:update:end}
    \State{$h_t\gets\BestH(\vlambda_t)$}
       \label{step:best:response}
\smallskip
\Statex{\hspace{\algorithmicindent}%
        \textit{// Calculate the current approximate saddle point}}
    \State{$\hQ_t\gets\frac{1}{t}\sum_{t'=1}^t h_{t'},
      \quad
            \hvlambda_t\gets\frac{1}{t}\sum_{t'=1}^t \vlambda_{t'}$}
\smallskip
\Statex{\hspace{\algorithmicindent}%
        \textit{// Check the suboptimality of $(\hQ_t,\hvlambda_t)$}}
    \State{$\overline{\nu}\gets L\bigParens{\hQ_t,\BestLambda(\hQ_t)}\;-\;L(\hQ_t,\hvlambda_t)$}
       \label{step:conv:check:start}
       \label{step:L:bestL}
    \State{$\underline{\nu}\gets L(\hQ_t,\hvlambda_t)\;-\;L\bigParens{\BestH(\hvlambda_t),\hvlambda_t}$}
       \label{step:L:bestH}
    \State{\textbf{if} $\max\set{\overline{\nu},\underline{\nu}}\le\nu$ \textbf{then return} $\hQ_t$}
       \label{step:conv:check:end}
\smallskip
\Statex{\hspace{\algorithmicindent}%
        \textit{// Apply the exponentiated-gradient update}}
    \State{Set $\vtheta^\sigma_{t+1}=\vtheta^\sigma_t+\sigma\eta\hvgamma(h_t)-\eta\hveps$, for $\sigma \in \{+,-\}$}
       \label{step:theta:update}
    \EndFor
\end{algorithmic}
\end{algorithm}

\subsection{Efficient Implementation of \Alg{regred:sp}}
\label{sec:efficient}

It is not too difficult to show that each iteration of \Alg{regred:sp} can be implemented in time $\order(n\log n+\card{\calA}\cardZ)$ plus the complexity of two calls to $\BestH$, on which we focus here, while deferring the remaining details to Appendix~\ref{sec:app:impl}.

\textbf{Reduction to cost-sensitive classification.}
We first show how to implement $\BestH$ using the CS oracle.
Letting $\lambda_{a,j}=\lambda^+_{a,j}-\lambda^-_{a,j}$, and dropping terms independent of $h$, the minimization over $h$ only needs to be over $\hcost(h) + \sum_{a,z}\lambda_{a,z}\hgamma_{a,z}(h)$.
The first term is already defined as CS error with respect to $\Ehat$ and $\E_Z$ (see Eq.~\ref{eq:hcost}). Let $p_{a}\coloneqq\Phat[A=a]$. Then we show in Appendix~\ref{sec:app:impl} that the minimization over $L(h,\vlambda)$ is equivalent to minimizing
\begin{align}
\label{eq:c:lambda}
&
  \hcost(h)
  +
  \!\sum_{a,z}\lambda_{a,z}\hgamma_{a,z}(h)
\\
\notag
&=
  \Ehat\biggBracks{
  \E_Z\biggBracks{
  \biggParens{
  \underbrace{
     c(\underY,Z)+
     \frac{N\lambda_{A,Z}}{p_A} -\!\sum_a N\lambda_{a,Z}
  }_{c_{\vlambda}(\underY,A,Z)}
  }
  h(X,Z)
  }}.
\end{align}
This corresponds to a CS problem
with $nN$ instances $\set{(X'_{i,z},C_{i,z})}_{i\le n,\,z\in\calZ}$ defined as
\[
  X'_{i,z}=(X_i,z),
\quad
  C_{i,z}=c_{\vlambda}(\underY_i,A_i,z)
.
\]
The sum $\sum_a N\lambda_{a,Z}$ in the definition of $c_{\vlambda}$ can
be precalculated once for each value of $Z$ in the overall time $\order(N\card{\calA})$.
After that the construction of the dataset takes time $\order(nN)$.

Based on Assumption~\ref{ass:samples} and \Thm{main:sp}, we expect $N\propto n^{\beta}$, so
this reduction to cost-sensitive classification takes time $\Omega(n^{1+\beta})$ and creates a dataset of size $\Omega(n^{1+\beta})$. This is substantially larger than the original problem of size $n$, given the typical value $\beta\approx 1/2$. We next describe two alternatives that run faster and only create datasets of size $n$. 

\textbf{Reduction to least-squares regression.}
The main overhead in the CS reduction above comes from the summation over $z\in\calZ$, implicit in the expectation over $Z$ in Eq.~\eqref{eq:c:lambda}. In order to eliminate this overhead, suppose we have access to a function $g_{\vlambda}$
such that
\[
g_{\vlambda}(\tY,A,f(X))
=
\frac{1}{N} \sum_{z \in \calZ} c_{\vlambda}(\tY, A, z) h_f(X,z)
\]
for any $\tY\in\tcalY$, $A \in \calA$, $X \in \calX$ and $f\in\calF$. In Appendix~\ref{sec:app:impl:ls}, we show how to precalculate $g_{\vlambda}(\cdot,\cdot,\cdot)$
in time $\order(\card{\calA}\card{\tcalY} N)$. Then minimizing Eq.~\eqref{eq:c:lambda} over $h$ is equivalent to finding $\min_{f \in \calF} \sum_{i=1}^n g_{\vlambda}(\underY_i,A_i,f(X_i))$. We heuristically solve this problem by calling a least-squares oracle on a dataset of size $n$. To this end, we replace $g_{\vlambda}$ by the square loss with respect to specific targets $U_i$ (different from $Y_i$). As targets, we choose the minima of $g_{\vlambda}$ for each fixed $\underY_i$ and $A_i$, that is, $U_i\in\argmin_{u\in[0,1]} g_{\vlambda}(\underY_i,A_i,u)$. We seek to solve
\[
  \min_{f \in \calF}\,\sum_{i=1}^n (U_i-f(X_i))^2
.
\]
To obtain the values $U_i$ we first calculate $c_{\vlambda}(\ty,a,z)$ across all $\ty$, $a$, and $z$
in the overall time $\order(\card{\calA}\card{\tcalY} N)$. Then, using the definition of $g_{\vlambda}$, the minimizer
of $g_{\vlambda}(\ty,a,u)$ over $u$ can be found in time $\order(N)$ for each pair of $\ty$ and $a$, so
all the minimizers can be precalculated in time $\order(\card{\calA}\card{\tcalY} N)$. Thus, preparing the dataset for
the least-squares reduction takes time $\order(\card{\calA}\card{\tcalY} N)$ and the resulting regression dataset
is of size $n$. Since $\card{\tcalY}=O(N)$, and $N\propto n^{\beta}$. The running time of the reduction is $O(n\log n + \card{\calA}n^{2\beta})$, which is substantially faster than $\Omega(n^{1+\beta})$ for typical $\beta\approx 1/2$.

\textbf{Reduction to risk minimization under $\ell$.}
A similar heuristic as in the least-squares reduction can be used to reduce $\BestH$ to risk minimization under any loss $\ell(y,u)$ that is convex in $u$. The complexity of this reduction is identical to that of the least-squares reduction, but the resulting risk minimization problem might be better aligned with the original problem, yielding a potentially superior oracle as we will see in the experiments. (See Appendix~\ref{app:reduction:ell} for details.)%


\section{Fair Regression with \EP}
\label{sec:max-loss-bound}

We now turn attention to our second notion of fairness. We show
how to reduce fair regression with \ep to loss minimization under $\ell$ without the fairness constraints.

The approach still follows the scheme of \citet{agarwal2018reductions}, but
thanks to the matched loss function between the objective and
the constraints, fair regression can be reduced directly to regression,
without the need for discretization.
We first replace the problem~\eqref{eq:fair-reg:ep} by its empirical
version
\begin{equation}
\label{eq:fair-reg:ep:emp}
  \min_{\mathclap{Q\in\Delta(\calF)}}\;\hloss(Q)
\;\;
\text{s.t.}
\;\;
  \hgammaE_{a}(Q)\le\hzeta_a
\;\;
  \forall a\in\calA.
\end{equation}
We then form the Lagrangian with the primal variable $Q$ and the dual variable $\vlambda$ with components $\lambda_a\in\R_+$ corresponding to the constraints $\hgammaE_{a}(Q)\le\hzeta_a$:
\begin{align*}
  \LE(Q,\vlambda)
&=\hloss(Q)+\sum_a\lambda_a\BigParens{\hgammaE_a(Q)-\hzeta_a}.
\end{align*}
We give a detailed pseudocode for our approach in Algorithm~\ref{alg:regred:ep} in Appendix~\ref{sec:app:alg:ep}, and describe the main differences from Algorithm~\ref{alg:regred:sp} here. As before, the algorithm alternates between exponentiated gradient updates on $\vlambda$ and best responses for $Q$ to compute an approximate saddle point:
\begin{equation}
\label{eq:minmax:ep}
  \adjustlimits\min_{\;\;Q\in\Delta\;\;}\max_{\vlambda\ge\vzero,\,\norm{\vlambda}_1\le B} \LE(Q,\vlambda)
.
\end{equation}
The saddle point always exists. However, unlike the fair regression problem under \sp, the fair regression problem under \epshort, i.e., Eq.~\eqref{eq:fair-reg:ep:emp}, might be infeasible. Therefore, Algorithm~\ref{alg:regred:ep} explicitly checks whether the distribution $\hQ$ that it finds satisfies constraints of Eq.~\eqref{eq:fair-reg:ep:emp}.

The other main difference between Algorithms~\ref{alg:regred:sp} and~\ref{alg:regred:ep} is in the computation of the best response $f$ to any given $\vlambda$, which requires solving the problem
\[
\min_{f \in \calF}\BigBracks{\hloss(f) + \sum_a \lambda_a \hgammaE_a(f)}.
\]
Denoting by $n_a$ the number of samples with $A_i = a$,
this minimization can be written as
\[
\min_{f \in \calF}\biggBracks{\frac{1}{n}\sum_{i=1}^n \ell(Y_i, f(X_i)) +
\sum_a\frac{\lambda_a}{n_a}\,\sum_{i:\:A_i=a}\ell(Y_i,f(X_i))},
\]
which can be solved using one call to the weighted risk-minimization oracle.

We finish this section with the optimality and fairness guarantees for $\smash{\hQ}$ returned by Algorithm~\ref{alg:regred:ep}. We assume that $\smash{\hzeta_{a}}$ are set according to the Rademacher
complexity of $\calF$:
\begin{assume}
\label{ass:samples:ep}
    There exist $C, C' > 0$ and $\omega \leq 1/2$ such that $R_n(\calF) \leq Cn^{-\omega}$ and $\smash{\hzeta_{a}} = \zeta_{a} + C'n_{a}^{-\omega}$, where $n_{a}$ is the number of samples where $A = a$.
\end{assume}


\begin{theorem}
\label{thm:main:ep}
Let Assumption~\ref{ass:samples:ep} hold for $C' \geq 4C + 2 + \sqrt{2\ln(4\card{\calA}/\delta)}$, where $\delta > 0$. Then Algorithm~\ref{alg:regred:ep} with $\nu \propto n^{-\omega}$ and $B \propto n^\omega$ terminates in
$\order(n^{4\omega}\ln\,\card{\calA})$ iterations and returns $\smash{\hQ}$ such that, with probability at least 1-$\delta$, one of the following holds:
\begin{enumerate}[nosep,itemsep=6pt]
\item $\hQ\ne\nullval$ and, for any $Q^\star$ feasible in problem~\eqref{eq:fair-reg:ep},\\[6pt]
${}\quad\loss(\hQ)\leq \loss(Q^\star) + \otil(n^{-\omega})$\\
${}\quad\gammaE_a(\hQ)\leq \zeta_{a} + \otil(n_{a}^{-\omega})
 \quad
 \text{for all $a\in\calA$.}$
\item $\hQ=\nullval$ and problem \eqref{eq:fair-reg:ep} is infeasible.
\end{enumerate}
\end{theorem}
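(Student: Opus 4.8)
The plan is to mirror the saddle-point reduction of \citet{agarwal2018reductions}, specialized to the matched-loss structure of \epshort, splitting the argument into four pieces: (a)~uniform convergence of $\hloss$ and of the per-group quantities $\hgammaE_a$ to $\loss$ and $\gammaE_a$; (b)~a feasibility dictionary linking empirical feasibility of~\eqref{eq:fair-reg:ep:emp} with population feasibility of~\eqref{eq:fair-reg:ep}; (c)~convergence of the exponentiated-gradient / best-response dynamics to an approximate saddle point of the $B$-truncated Lagrangian game; and (d)~transfer of the empirical saddle-point guarantees back to the population level.

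For~(a), condition on the realized group memberships $\set{A_i}_{i=1}^n$, so that each $n_a$ is fixed and the examples within group $a$ are i.i.d.\ from the conditional distribution. Since $\ell$ is $1$-Lipschitz in its second argument, Talagrand's contraction inequality bounds the Rademacher complexity of $\bigSet{(x,y)\mapsto\ell(y,f(x)):f\in\calF}$ on any $m$-sample by $R_m(\calF)$; symmetrization plus a bounded-differences argument, together with a union bound over $a\in\calA$, then yields an event $\event$ of probability at least $1-\delta$ on which $\sup_{f\in\calF}\bigAbs{\hloss(f)-\loss(f)}=\otil(n^{-\omega})$ and $\sup_{f\in\calF}\bigAbs{\hgammaE_a(f)-\gammaE_a(f)}=\otil(n_a^{-\omega})$ for every $a$ (no discretization is needed, as $\hgammaE_a$ is literally the within-group empirical average of $\ell$). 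The hypothesis $C'\ge 4C+2+\sqrt{2\ln(4\card{\calA}/\delta)}$ is chosen, exactly as in \citet{agarwal2018reductions}, so that on $\event$ every such group-scale fluctuation is dominated by $C'n_a^{-\omega}$; in particular, for~(b), any $Q^\star$ feasible in~\eqref{eq:fair-reg:ep} satisfies $\hgammaE_a(Q^\star)\le\gammaE_a(Q^\star)+C'n_a^{-\omega}\le\zeta_a+C'n_a^{-\omega}=\hzeta_a$, so $Q^\star$ is also feasible in~\eqref{eq:fair-reg:ep:emp}, which forces $\min_Q\max_{\vlambda}\LE(Q,\vlambda)\le\hloss(Q^\star)\le 1$.

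For~(c), the $\vlambda$-player runs Hedge over the $\card{\calA}+1$ coordinates of the rescaled dual with step size $\eta=\nu/(8B)$, while the $Q$-player best-responds through one call to the weighted risk-minimization oracle; since the per-round gradient $\hvgammaE(f_t)-\hvzeta$ has entries in $[-1,1]$, the duality gap of the running average $(\hQ_t,\hvlambda_t)$ --- which is precisely $\overline\nu+\underline\nu$ as computed by the algorithm --- is at most the $\vlambda$-player's average regret, $\order\bigParens{B\sqrt{\ln\card{\calA}/t}}$. With $B\propto n^\omega$ and $\nu\propto n^{-\omega}$ the stopping test $\max\set{\overline\nu,\underline\nu}\le\nu$ therefore fires after $\order(B^2\ln\card{\calA}/\nu^2)=\order(n^{4\omega}\ln\card{\calA})$ iterations, in both the feasible and infeasible cases. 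For~(d), from a $\nu$-approximate saddle point one obtains, by evaluating $\LE(\hQ,\cdot)$ at $\vlambda=\vzero$ and at its maximizer, that $\hloss(\hQ)\le\hloss(Q^\star)+\order(\nu)$ and $\hgammaE_a(\hQ)\le\hzeta_a+\order(\nu/B)$ for all $a$; combining with $\event$, $n_a\le n$, and the choices of $B,\nu$ gives $\loss(\hQ)\le\loss(Q^\star)+\otil(n^{-\omega})$ and $\gammaE_a(\hQ)\le\zeta_a+\otil(n_a^{-\omega})$. The dichotomy in the statement then follows from~(b): if~\eqref{eq:fair-reg:ep} is feasible, then on $\event$ so is~\eqref{eq:fair-reg:ep:emp}, the algorithm's feasibility check succeeds, and it outputs $\hQ\ne\nullval$ with the above guarantees; contrapositively, $\hQ=\nullval$ can only occur when~\eqref{eq:fair-reg:ep} is infeasible.

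The point requiring the most care --- beyond what is inherited from \citet{agarwal2018reductions} --- is item~(b) together with the slack bookkeeping it entails, since tolerances appear in three places that must be reconciled: the slack $\hzeta_a$ baked into $\LE$, the tolerance with which Algorithm~\ref{alg:regred:ep} certifies (in)feasibility of the candidate $\hQ$, and the final target $\zeta_a+\otil(n_a^{-\omega})$. An approximate saddle point only certifies $\hgammaE_a(\hQ)\le\hzeta_a+\order(\nu/B)$ rather than $\hgammaE_a(\hQ)\le\hzeta_a$, so the algorithm's feasibility test must be stated with a matching $\order(\nu/B)$ tolerance; one then has to check that this relaxed test still passes whenever $Q^\star$ is empirically feasible (so a feasible population problem never returns $\nullval$), yet is tight enough that any output $\hQ\ne\nullval$ still obeys the advertised constraint bound. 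Everything else --- the contraction/Rademacher estimate and the Hedge regret bound --- is routine; the genuine simplification relative to the \sp analysis is that, because the objective and the constraints both use $\ell$, the best response is a single weighted risk minimization and no discretization of $[0,1]$ is ever required.
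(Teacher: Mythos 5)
Your proposal is correct and follows essentially the same route as the paper: the \citet{agarwal2018reductions} saddle-point machinery for the iteration count and the empirical near-optimality/near-feasibility of the averaged play, a Rademacher/contraction-based uniform-convergence lemma applied to the overall distribution and to each group-conditional distribution, the choice of $C'$ so that population-feasible points are empirically feasible (driving the feasible/infeasible dichotomy), and a final transfer of the empirical bounds to $\loss$ and $\gammaE_a$. The only quibble is quantitative: the constraint tolerance certified by an approximate saddle point (and used in the algorithm's feasibility check) is $(1+2\nu)/B = O(1/B)$, not $O(\nu/B)$ as you write, but with $B\propto n^{\omega}$ this is still $O(n^{-\omega})$ and nothing downstream changes.
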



\section{Experiments}
\label{sec:experiments}

\begin{figure*}[!t]
  \includegraphics[width=\textwidth, height=8cm]{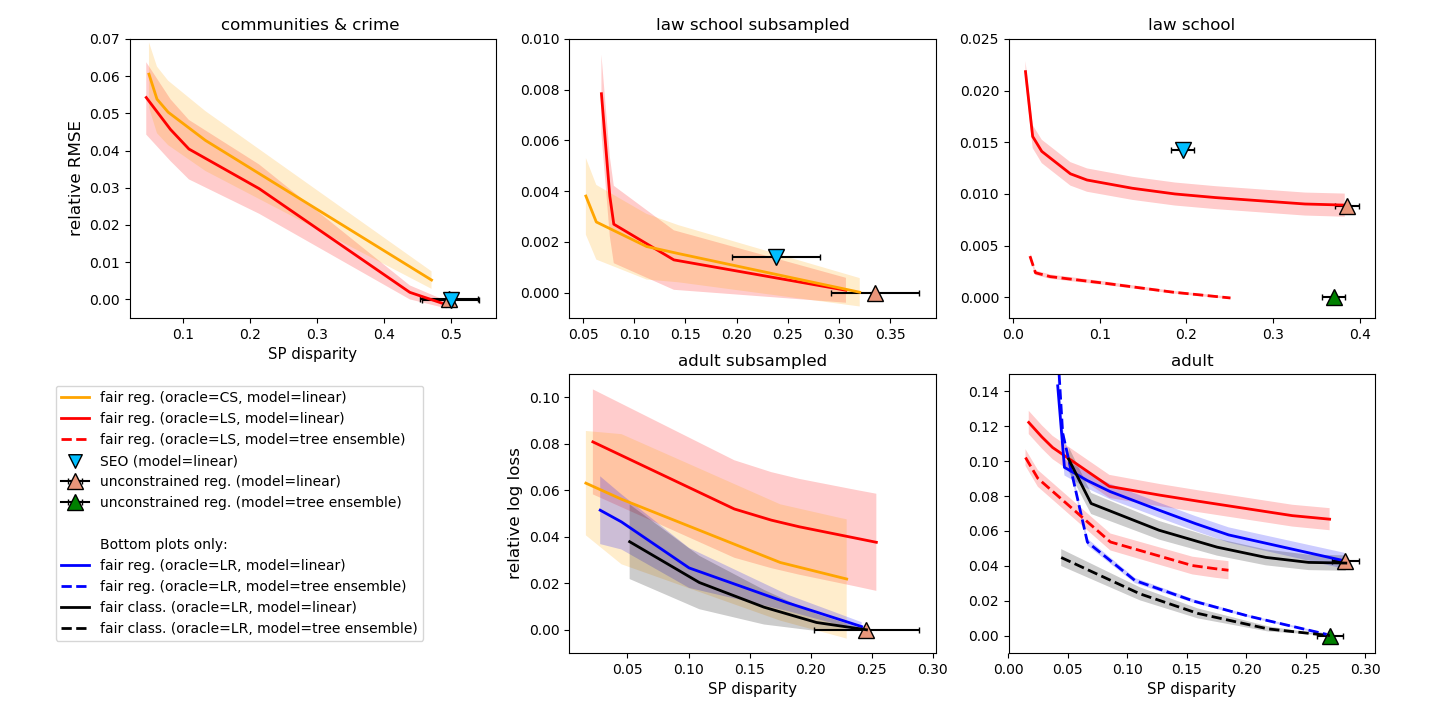}
\vspace{-8mm}
\caption{Relative test loss versus the worst constraint violation with respect
  to SP. Relative losses are computed by subtracting the smallest
  baseline loss from the actual loss. For our algorithm and fair classification we plot the
  convex envelope of the predictors obtained on training data at
  various accuracy--fairness tradeoffs. We show 95\% confidence bands
  for the relative loss of our method and fair classification, and also show 95\% confidence
  intervals for constraint violation
  (the same for all methods).
  Our method dominates or matches the baselines
  up to statistical uncertainty on all datasets except \emph{adult},
  where fair classification is slightly better.}
\label{fig:testset}
\end{figure*}

We evaluate our method on the tasks of least-squares regression and logistic regression
under statistical parity. We use the following three datasets:

\emph{Adult}: The adult income dataset~\cite{Lichman:2013}
  has 48,842 examples. The task is to predict the probability that
  an individual makes more than \$50k per year via logistic
  loss minimization, with gender as the protected attribute.

\emph{Law school}: Law School Admissions Council’s National
  Longitudinal Bar Passage Study~\cite{Wightman} has 20,649
  examples. The task is to predict a student’s GPA (normalized
  to $[0,1]$) via square loss minimization, with race as
  the protected attribute (white versus non-white).

\emph{Communities \& crime}: The dataset contains
  socio-economic, law enforcement, and crime data about
  communities in the US~\cite{communities} with 1,994 examples. The
  task is to predict the number of violent crimes per 100,000
  population (normalized to $[0, 1]$) via square loss
  minimization, with race as the protected attribute (whether the
  majority population of the community is white).

For the two larger datasets (\emph{adult} and \emph{law school}), we also
created smaller (subsampled) versions by picking random 2,000 points. Thus we ended up
with a total of five
datasets, and split each into 50\% for training and 50\% for testing.\looseness=-1

We ran Algorithm~\ref{alg:regred:sp} on each
training set over a range of constraint slack values
$\hat \eps$, with a fixed discretization grid of size 40:
$\calZ=\{1/40, 2/40, \ldots, 1\}$. Among the solutions for different $\hat\eps$,
we selected the ones on the Pareto front based on their training losses
and SP disparity $\max_{a,z}\{\hat \gamma_{a, z}\}$. We
then evaluated the selected predictors on the test set, and show the resulting Pareto front
in Figure~\ref{fig:testset}.
%

We ran our algorithm with the three types of
reductions from Section~\ref{sec:efficient}: reductions to cost-sensitive
(CS) oracles, least-squares
(LS) oracles, and logistic-loss minimization (LR) oracles.
Our CS oracle sought the linear model minimizing
weighted hinge-loss (as a surrogate for weighted classification error).
%
%
Because of unfavorable scaling
of the cost-sensitive problem sizes (see Section~\ref{sec:efficient}),
we only ran the CS oracle on the three small datasets.
We considered two variants of LS and LR oracles: linear learners from
scikit-learn \citep{scikit-learn}, and tree ensembles from XGBoost~\cite{xgb}.
Tree ensembles heavily overfitted smaller datasets, so we only show their performance
on two larger datasets. We only used LR oracles when the target loss was logistic,
whereas we used LS oracles across all datasets.
%

In addition to our algorithm, we also evaluated regression without any fairness constraints,
and two baselines from the fair classification and fair regression literature.

On the three datasets where the task was least-squares regression, we evaluated the \emph{full
  substantive equality of opportunity} (SEO) estimate of
\citet{johnson2016impartial}. It can be obtained in a closed form by solving for the
linear model that minimizes least-squares error while having zero correlation with
the protected attribute. In contrast, our method seeks
to limit not just correlation, but statistical dependence.

On the two datasets where the task was logistic regression, we ran the \emph{fair classification} (FC)
reduction of~\citet{agarwal2018reductions} with the same LR oracles as in our algorithm.
For this choice of oracles, the classifiers returned by FC
are implemented by logistic models and return real-valued scores, which we evaluated.
We ran FC across a range of trade-offs between classification accuracy
and statistical parity (in the classification sense) and show the resulting Pareto front.
Note that
FC only enforces statistical parity (SP) when the scores are thresholded at zero, whereas our method
enforces SP across all thresholds.

In Figure~\ref{fig:testset}, we see that all of our reductions are able to significantly reduce disparity, without strongly impacting the overall loss. On \emph{communities \& crime}, there is a more substantial accuracy--fairness tradeoff, which can  be used as a starting point to diagnose the data quality for the two racial subgroups.
Our methods dominate SEO in least-squares tasks, but are slightly worse than FC in logistic regression.
The difference is statistically significant only on \emph{adult}, where it points to the limitations of our LS and LR reduction heuristics.
However, for the most part, LR and LS reductions achieve tradeoffs on par with the CS reduction, and are substantially faster to run (see Appendix~\ref{app:exp}). The results on \emph{adult} and \emph{adult subsampled} suggest that reducing to a matching loss is preferable over reducing to another loss.\looseness=-1

In summary,
we have shown that our scheme efficiently handles a range of losses and regressor classes and, where possible, diminishes disparity while maintaining the overall accuracy. The emergence of FC
as a strong baseline for logistic regression suggests that our regression-based reduction heuristics can be further improved, which we leave open for future research.





\section*{Acknowledgements}
ZSW is supported in part by a Google Faculty Research Award, a J.P.
Morgan Faculty Award, and a Facebook Research Award. Part of this work
was completed while ZSW was at Microsoft Research-New York City.

\bibliography{arxiv.bbl}

\begin{thebibliography}{37}
\providecommand{\natexlab}[1]{#1}
\providecommand{\url}[1]{\texttt{#1}}
\expandafter\ifx\csname urlstyle\endcsname\relax
  \providecommand{\doi}[1]{doi: #1}\else
  \providecommand{\doi}{doi: \begingroup \urlstyle{rm}\Url}\fi

\bibitem[Agarwal et~al.(2018)Agarwal, Beygelzimer, Dud{\'\i}k, Langford, and
  Wallach]{agarwal2018reductions}
Agarwal, A., Beygelzimer, A., Dud{\'\i}k, M., Langford, J., and Wallach, H.
\newblock A reductions approach to fair classification.
\newblock In \emph{ICML}, 2018.

\bibitem[Alabi et~al.(2018)Alabi, Immorlica, and Kalai]{alabi2018unleashing}
Alabi, D., Immorlica, N., and Kalai, A.
\newblock Unleashing linear optimizers for group-fair learning and
  optimization.
\newblock In \emph{Conference on Learning Theory}, 2018.

\bibitem[Aliprantis \& Border(2006)Aliprantis and
  Border]{aliprantis2006infinite}
Aliprantis, C. and Border, K.
\newblock \emph{Infinite Dimensional Analysis: A Hitchhiker's Guide}.
\newblock Springer Berlin Heidelberg, 2006.

\bibitem[Anthony \& Bartlett(2009)Anthony and Bartlett]{anthony2009neural}
Anthony, M. and Bartlett, P.~L.
\newblock \emph{Neural network learning: Theoretical foundations}.
\newblock cambridge university press, 2009.

\bibitem[Barocas \& Selbst(2016)Barocas and Selbst]{barocas16big}
Barocas, S. and Selbst, A.~D.
\newblock Big data's disparate impact.
\newblock \emph{California Law Review}, 104:\penalty0 671--732, 2016.

\bibitem[Bartlett \& Mendelson(2002)Bartlett and Mendelson]{BartlettMe02}
Bartlett, P.~L. and Mendelson, S.
\newblock Rademacher and gaussian complexities: Risk bounds and structural
  results.
\newblock \emph{Journal of Machine Learning Research}, 3:\penalty0 463--482,
  2002.

\bibitem[Berk et~al.(2017)Berk, Heidari, Jabbari, Kearns, and Roth]{berk}
Berk, R., Heidari, H., Jabbari, S., Kearns, M., and Roth, A.
\newblock Fairness in criminal justice risk assessments: The state of the art.
\newblock arXiv:1703.09207, 2017.

\bibitem[Boucheron et~al.(2005)Boucheron, Bousquet, and
  Lugosi]{BoucheronBoLu05}
Boucheron, S., Bousquet, O., and Lugosi, G.
\newblock Theory of classification: a survey of some recent advances.
\newblock \emph{ESAIM: Probability and Statistics}, 9:\penalty0 323--375, 2005.

\bibitem[Buolamwini \& Gebru(2018)Buolamwini and Gebru]{buolamwini2018gender}
Buolamwini, J. and Gebru, T.
\newblock Gender shades: Intersectional accuracy disparities in commercial
  gender classification.
\newblock In \emph{Conference on Fairness, Accountability and Transparency},
  pp.\  77--91, 2018.

\bibitem[Calders et~al.(2013)Calders, Karim, Kamiran, Ali, and
  Zhang]{CaldersEtAl13}
Calders, T., Karim, A., Kamiran, F., Ali, W., and Zhang, X.
\newblock Controlling attribute effect in linear regression.
\newblock In \emph{2013 IEEE 13th International Conference on Data Mining},
  pp.\  71--80, 2013.

\bibitem[Chen et~al.(2018)Chen, Johansson, and Sontag]{chen2018my}
Chen, I., Johansson, F.~D., and Sontag, D.
\newblock Why is my classifier discriminatory?
\newblock In \emph{NeurIPS}, 2018.

\bibitem[Chen \& Guestrin(2016)Chen and Guestrin]{xgb}
Chen, T. and Guestrin, C.
\newblock {XGB}oost: {A} scalable tree boosting system.
\newblock In \emph{Proceedings of the 22nd ACM SIGKDD International Conference
  on Knowledge Discovery and Data Mining}, KDD '16, pp.\  785--794, 2016.

\bibitem[Chouldechova(2017)]{chouldechova}
Chouldechova, A.
\newblock Fair prediction with disparate impact: A study of bias in recidivism
  prediction instruments.
\newblock \emph{Big Data}, Special Issue on Social and Technical Trade-Offs,
  2017.

\bibitem[Corbett{-}Davies \& Goel(2018)Corbett{-}Davies and
  Goel]{CorbettDaviesGo18}
Corbett{-}Davies, S. and Goel, S.
\newblock The measure and mismeasure of fairness: {A} critical review of fair
  machine learning.
\newblock arXiv, 2018.

\bibitem[Dieterich et~al.(2016)Dieterich, Mendoza, and
  Brennan]{dieterich2016compas}
Dieterich, W., Mendoza, C., and Brennan, T.
\newblock Compas risk scales: Demonstrating accuracy equity and predictive
  parity.
\newblock 2016.

\bibitem[Fukuchi et~al.(2013)Fukuchi, Sakuma, and Kamishima]{FukuchiEtAl13}
Fukuchi, K., Sakuma, J., and Kamishima, T.
\newblock Prediction with model-based neutrality.
\newblock In \emph{Machine Learning and Knowledge Discovery in
  Databases---European Conference, {ECML} {PKDD} 2013}, pp.\  499--514, 2013.

\bibitem[Gurobi~Optimization(2018)]{gurobi}
Gurobi~Optimization, L.
\newblock Gurobi optimizer reference manual, 2018.
\newblock URL \url{http://www.gurobi.com}.

\bibitem[Hardt et~al.(2016)Hardt, Price, and Srebro]{hardt16}
Hardt, M., Price, E., and Srebro, N.
\newblock Equality of opportunity in supervised learning.
\newblock In \emph{Neural Information Processing Systems (NIPS)}, 2016.

\bibitem[Holzer \& Neumark(2006)Holzer and Neumark]{HolzerNe06}
Holzer, H.~J. and Neumark, D.
\newblock Affirmative action: What do we know?
\newblock \emph{Journal of Policy Analysis and Management}, 25\penalty0
  (2):\penalty0 463--490, 2006.

\bibitem[Johnson et~al.(2016)Johnson, Foster, and Stine]{johnson2016impartial}
Johnson, K.~D., Foster, D.~P., and Stine, R.~A.
\newblock Impartial predictive modeling: Ensuring fairness in arbitrary models.
\newblock arXiv:1608.00528, 2016.

\bibitem[Kamishima et~al.(2012)Kamishima, Akaho, Asoh, and
  Sakuma]{KamishimaEtAl12}
Kamishima, T., Akaho, S., Asoh, H., and Sakuma, J.
\newblock Fairness-aware classifier with prejudice remover regularizer.
\newblock In \emph{Machine Learning and Knowledge Discovery in
  Databases---European Conference, {ECML} {PKDD} 2012}, pp.\  35--50, 2012.

\bibitem[Kleinberg et~al.(2017)Kleinberg, Mullainathan, and
  Raghavan]{kleinberg2017inherent}
Kleinberg, J., Mullainathan, S., and Raghavan, M.
\newblock Inherent trade-offs in the fair determination of risk scores.
\newblock In \emph{Proceedings of the 8th Innovations in Theoretical Computer
  Science Conference}, 2017.

\bibitem[Komiyama et~al.(2018)Komiyama, Takeda, Honda, and
  Shimao]{komiyama2018nonconvex}
Komiyama, J., Takeda, A., Honda, J., and Shimao, H.
\newblock Nonconvex optimization for regression with fairness constraints.
\newblock In \emph{International Conference on Machine Learning}, pp.\
  2742--2751, 2018.

\bibitem[Ledoux \& Talagrand(1991)Ledoux and Talagrand]{LedouxTa91}
Ledoux, M. and Talagrand, M.
\newblock \emph{Probability in Banach Spaces: Isoperimetry and Processes}.
\newblock Springer, 1991.

\bibitem[Lehmann \& Romano(2006)Lehmann and Romano]{lehmann2006testing}
Lehmann, E.~L. and Romano, J.~P.
\newblock \emph{Testing statistical hypotheses}.
\newblock Springer Science \& Business Media, 2006.

\bibitem[Lichman(2013)]{Lichman:2013}
Lichman, M.
\newblock {UCI} machine learning repository, 2013.
\newblock URL \url{http://archive.ics.uci.edu/ml}.

\bibitem[Lowenkamp et~al.(2012)Lowenkamp, L~Johnson, Holsinger, Vanbenschoten,
  and R~Robinson]{pcra2012}
Lowenkamp, C., L~Johnson, J., Holsinger, A., Vanbenschoten, S., and R~Robinson,
  C.
\newblock The federal post conviction risk assessment (pcra): A construction
  and validation study.
\newblock \emph{Psychological services}, 10, 2012.

\bibitem[Northpointe(2010)]{compas2010}
Northpointe.
\newblock Compas risk \& need assessment system: {S}elected questions posed by
  inquiring agencies.
\newblock 2010.

\bibitem[Pedregosa et~al.(2011)Pedregosa, Varoquaux, Gramfort, Michel, Thirion,
  Grisel, Blondel, Prettenhofer, Weiss, Dubourg, Vanderplas, Passos,
  Cournapeau, Brucher, Perrot, and Duchesnay]{scikit-learn}
Pedregosa, F., Varoquaux, G., Gramfort, A., Michel, V., Thirion, B., Grisel,
  O., Blondel, M., Prettenhofer, P., Weiss, R., Dubourg, V., Vanderplas, J.,
  Passos, A., Cournapeau, D., Brucher, M., Perrot, M., and Duchesnay, E.
\newblock Scikit-learn: Machine learning in {P}ython.
\newblock \emph{Journal of Machine Learning Research}, 12:\penalty0 2825--2830,
  2011.

\bibitem[P{\'e}rez-Suay et~al.(2017)P{\'e}rez-Suay, Laparra, Mateo-Garc{\'\i}a,
  Mu{\~n}oz-Mar{\'\i}, G{\'o}mez-Chova, and Camps-Valls]{perez2017fair}
P{\'e}rez-Suay, A., Laparra, V., Mateo-Garc{\'\i}a, G., Mu{\~n}oz-Mar{\'\i},
  J., G{\'o}mez-Chova, L., and Camps-Valls, G.
\newblock Fair kernel learning.
\newblock In \emph{Joint European Conference on Machine Learning and Knowledge
  Discovery in Databases}, pp.\  339--355. Springer, 2017.

\bibitem[Podesta et~al.(2014)Podesta, Pritzker, Moniz, Holdren, and
  Zients]{podesta2014big}
Podesta, J., Pritzker, P., Moniz, E.~J., Holdren, J., and Zients, J.
\newblock Big data: Seizing opportunities and preserving values.
\newblock 2014.

\bibitem[Redmond \& Baveja(2002)Redmond and Baveja]{communities}
Redmond, M. and Baveja, A.
\newblock A data-driven software tool for enabling cooperative information
  sharing among police departments.
\newblock \emph{European Journal of Operational Research 14}, 2002.

\bibitem[Selbst et~al.(2018)Selbst, Friedler, Venkatasubramanian, Vertesi,
  et~al.]{selbst2018fairness}
Selbst, A.~D., Friedler, S., Venkatasubramanian, S., Vertesi, J., et~al.
\newblock Fairness and abstraction in sociotechnical systems.
\newblock In \emph{ACM Conference on Fairness, Accountability, and Transparency
  (FAT*)}, 2018.

\bibitem[{US Federal Reserve}(2007)]{federal2007report}
{US Federal Reserve}.
\newblock Report to the congress on credit scoring and its effects on the
  availability and affordability of credit.
\newblock 2007.

\bibitem[Veale et~al.(2018)Veale, Van~Kleek, and Binns]{veale2018fairness}
Veale, M., Van~Kleek, M., and Binns, R.
\newblock Fairness and accountability design needs for algorithmic support in
  high-stakes public sector decision-making.
\newblock In \emph{Proceedings of the 2018 CHI Conference on Human Factors in
  Computing Systems}, 2018.

\bibitem[Waters \& Miikkulainen(2014)Waters and Miikkulainen]{waters2014grade}
Waters, A. and Miikkulainen, R.
\newblock Grade: Machine learning support for graduate admissions.
\newblock \emph{AI Magazine}, 35\penalty0 (1):\penalty0 64, 2014.

\bibitem[Wightman(1998)]{Wightman}
Wightman, L.
\newblock {LSAC} {N}ational {L}ongitudinal {B}ar {P}assage {S}tudy, 1998.

\end{thebibliography}
\bibliographystyle{icml2019}

\appendix
\newpage
\onecolumn

\section{Proof of Lemma~\ref{lemma:red:hf}}

We begin by rewriting the loss $\ell_\alpha$ as a cost-sensitive classification loss. First, we use the telescoping trick to obtain
\begin{align*}
\ell_{\alpha}(y,u)
&
 =\ell\BigParens{\undery,\floors{u}_\alpha\!+\tfrac{\alpha}{2}}
\\
&
=\ell(\undery,\tfrac{\alpha}{2})
    +\sum_{z\in\calZ}
     \BigBracks{
     \underbrace{\ell\BigParens{\undery,z+\tfrac{\alpha}{2}}-\ell\BigParens{\undery,z-\tfrac{\alpha}{2}}
     }_{c(\undery,z)/N}
     }
     \ind\braces{u\ge z}.
\end{align*}

Now plugging in $u=\underf(x)$ and using the fact that for $z\in\calZ$, we have $\ind\set{\underf(x)\ge z}=\ind\set{f(x)\ge z}=h_f(x,z)$, we obtain
\begin{align}
\label{eq:loss:hf}
  \ell_\alpha(y,\underf(x))
&
  =\ell(\undery,\tfrac{\alpha}{2})+\frac{1}{N}\sum_{z\in\calZ} c(\undery,z)h_f(x,z).
\end{align}
Thus, ignoring the constant $\ell(\undery,\tfrac{\alpha}{2})$, the loss $\ell_\alpha$ can be viewed as the cost-sensitive error of the classifier $h_f$.

For $z\in\calZ$, we can rewrite $\gamma_{a,z}(\underf)$ as
\begin{align}
\notag
\gamma_{a,z}(\underf)
  &= \P[\underf(X)\ge z\given A=a] - \P[\underf(X)\ge z]
\\
\label{eq:gamma:hf}
  &= \E[h_f(X,z)\given A=a] - \E[h_f(X,z)]
\\
\notag
  &=\gamma_{a,z}(h_f),
\end{align}
completing the proof of the lemma.

\section{Iteration Complexity of Algorithm~\ref{alg:regred:sp}}

\begin{theorem}
\label{thm:alg:sp}
\Alg{regred:sp} terminates in at most $\frac{16 B^2\log(2\card{\calA}\cardZ+1)}{\nu^2}$ iterations. Furthermore,
if $Q$ is any feasible point of \eqref{eq:prob:sp:empirical} then the solution $\hQ$ returned by \Alg{regred:sp} satisfies:
\begin{align*}
   \hcost(\hQ)
   &\le
   \hcost(Q)+2\nu
\\
   \bigAbs{\hgamma_{a,z}(\hQ)}
   &\le\heps_{a}+\frac{2+2\nu}{B}
\quad
\text{for all $a\in\calA$, $z\in\calZ$.}
\end{align*}
\end{theorem}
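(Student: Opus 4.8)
The plan is to recognize \Alg{regred:sp} as Freund--Schapire no-regret game dynamics for the zero-sum game with payoff $L(Q,\vlambda)$ played over $Q\in\Delta(\calH)$ and $\vlambda\in\Lambda\coloneqq\set{\vlambda\ge\vzero:\norm{\vlambda}_1\le B}$, with the $\vlambda$-player running exponentiated gradient (\StepsRange{lambda:update} and \Step{theta:update}) and the $Q$-player best-responding (\Step{best:response}); this is the reduction analysis of \citet{agarwal2018reductions} specialized to our game, which I would follow. Two structural facts set up everything. First, $L$ is affine in $Q$ (through $\hcost$ and $\hgamma_{a,z}$) and affine in $\vlambda$, with $L(Q,\vlambda)=\hcost(Q)+\langle\vlambda,\vr(Q)\rangle$ where the $2\card{\calA}\cardZ$ coordinates of $\vr(Q)$ are $\pm\hgamma_{a,z}(Q)-\heps_a$. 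Second, since $\hcost(h),\hgamma_{a,z}(h)\in[-1,1]$ and $\heps_a\in[0,1]$, every coordinate of $\vr(\cdot)$ lies in $[-2,1]$ and is hence bounded by $\rho\coloneqq2$ in absolute value.

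First I would bound the $\vlambda$-player's regret. \StepsRange{lambda:update} and \Step{theta:update} are exactly the Hedge update over $2\card{\calA}\cardZ+1$ experts --- the $2\card{\calA}\cardZ$ coordinates $(a,z,\sigma)$ plus one fictitious slack expert held at $\theta\equiv0$, accounting for the $1$ in the denominator of \Step{lambda:update:start} --- with learning rate $\eta=\nu/(8B)$ and gain vector $\sigma\hvgamma(h_t)-\hveps$ (and $0$ on the slack coordinate), so that $\vlambda_t$ is $B$ times the resulting Hedge distribution. The standard exponentiated-gradient guarantee on the simplex, multiplied through by $B$, then gives for every $\vlambda\in\Lambda$
\[
  \frac1T\sum_{t=1}^T L(h_t,\vlambda)-\frac1T\sum_{t=1}^T L(h_t,\vlambda_t)
  \;\le\;
  \frac{B\ln(2\card{\calA}\cardZ+1)}{\eta T}+\eta B\rho^2
  \;=\;
  \frac{8B^2\ln(2\card{\calA}\cardZ+1)}{\nu T}+\frac{\nu}{2},
\]
which is $\le\nu$ once $T\ge T_0\coloneqq 16B^2\ln(2\card{\calA}\cardZ+1)/\nu^2$. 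Because the $Q$-player best-responds, $\frac1T\sum_t L(h_t,\vlambda_t)=\frac1T\sum_t\min_{Q}L(Q,\vlambda_t)\le\min_{Q}\frac1T\sum_t L(Q,\vlambda_t)=\min_{Q}L(Q,\hvlambda_T)$, using affineness of $L$ in $\vlambda$. Combining these as in the usual Freund--Schapire telescoping, for any $T\ge T_0$ the running averages satisfy $\max_{\vlambda\in\Lambda}L(\hQ_T,\vlambda)-\min_{Q}L(Q,\hvlambda_T)\le\nu$. Since by definition $\overline\nu=\max_{\vlambda\in\Lambda}L(\hQ_T,\vlambda)-L(\hQ_T,\hvlambda_T)\ge0$ and $\underline\nu=L(\hQ_T,\hvlambda_T)-\min_{Q}L(Q,\hvlambda_T)\ge0$ (here $\hvlambda_T\in\Lambda$ and $\hQ_T\in\Delta(\calH)$), and they sum to exactly that difference, we get $\max\set{\overline\nu,\underline\nu}\le\nu$, so the test in \Step{conv:check:end} fires within $T_0$ iterations, proving the iteration bound.

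Next I would convert the stopping condition into the two bounds on the returned $\hQ$. Write $\hQ=\hQ_T$, $\hvlambda=\hvlambda_T$ for the termination iteration $T$, so $\overline\nu\le\nu$ and $\underline\nu\le\nu$. From $\underline\nu\le\nu$ and the fact that for any $Q$ feasible in~\eqref{eq:prob:sp:empirical} every bracketed term of $L(Q,\hvlambda)$ is nonpositive (as $\hvlambda\ge\vzero$ and $\abs{\hgamma_{a,z}(Q)}\le\heps_a$), we get $L(\hQ,\hvlambda)\le\min_{Q'}L(Q',\hvlambda)+\nu\le L(Q,\hvlambda)+\nu\le\hcost(Q)+\nu$. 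Taking $\vlambda=\vzero\in\Lambda$ in the definition of $\overline\nu$ gives $\hcost(\hQ)=L(\hQ,\vzero)\le L(\hQ,\hvlambda)+\overline\nu\le\hcost(Q)+2\nu$, the first claim. For the constraints, fix $a,z$ and take $\vlambda\in\Lambda$ placing all mass $B$ on the coordinate $\lambda^+_{a,z}$: then $L(\hQ,\vlambda)=\hcost(\hQ)+B\bigParens{\hgamma_{a,z}(\hQ)-\heps_a}\le L(\hQ,\hvlambda)+\overline\nu$, and using $L(\hQ,\hvlambda)\le\hcost(Q)+\nu\le1+\nu$, $\hcost(\hQ)\ge-1$, and $\overline\nu\le\nu$ yields $B\bigParens{\hgamma_{a,z}(\hQ)-\heps_a}\le 2+2\nu$; the symmetric choice (mass $B$ on $\lambda^-_{a,z}$) gives $-\hgamma_{a,z}(\hQ)-\heps_a\le(2+2\nu)/B$, so $\bigAbs{\hgamma_{a,z}(\hQ)}\le\heps_a+(2+2\nu)/B$ for all $a\in\calA$, $z\in\calZ$.

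I do not expect a deep obstacle: the argument is the specialization of the reduction analysis of \citet{agarwal2018reductions}, and nothing about the empirical problem~\eqref{eq:prob:sp:empirical} (unlike the generalization statement in \Thm{main:sp}) interacts with the known distribution of $Z$. The only points requiring care are quantitative: tracking the exponentiated-gradient constant so that the iteration count is exactly $16B^2\ln(2\card{\calA}\cardZ+1)/\nu^2$ with the prescribed $\eta=\nu/(8B)$ (and noting the mild side condition $\eta\rho\le1$, i.e.\ $\nu\le4B$, under which the regret bound used above is valid), and carrying the range $[-2,1]$ of the coordinates of $\vr(\cdot)$ together with the crude bound $\hcost\in[-1,1]$ through to the $(2+2\nu)/B$ term in the constraint-violation bound.
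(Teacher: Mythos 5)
Your proposal is correct and follows essentially the same route as the paper: the paper proves this theorem by directly invoking Theorem~1 and Lemmas~2 and~3 of \citet{agarwal2018reductions} with the maximal constraint violation $\rho=2$ and the observation that the costs lie in $[-1,1]$ (which turns their $\tfrac{1+2\nu}{B}$ into $\tfrac{2+2\nu}{B}$), and your argument is precisely the inline unpacking of those cited results, with all constants ($\eta=\nu/(8B)$, the $2\card{\calA}\cardZ+1$ experts, the $[-2,1]$ range of the constraint coordinates, and the $\hcost\in[-1,1]$ bound) matching. No gaps.
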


\begin{proof}
This result is essentially a corollary of Theorem 1, and Lemmas 2 and 3 of \citet{agarwal2018reductions}. Specifically, we note that the constraints appearing in our problem~\eqref{eq:prob:sp:empirical} can be cast in their general framework along the lines of their Example 1, with a total of $2\cardA\cardZ$ constraints.
%
%
Following their Example 3, we obtain that the maximal constraint violation $\rho$, needed in their Theorem 1, is at most 2. We further observe that the violation of the i.i.d.\ structure by explicit averaging over $z$ values does not impact their optimization analysis in any way.
Therefore, their Theorem 1 with $\rho=2$ implies that our Algorithm~\ref{alg:regred:sp} finds a $\nu$-approximate saddle point of the Lagrangian in at most $\frac{16 B^2\log(2\card{\calA}\cardZ+1)}{\nu^2}$ iterations as desired.

To bound $\hcost(\hQ)$ and $\bigAbs{\hgamma_{a,z}(\hQ)}$ we appeal to their Lemmas 2 and 3. First note that
their approach applies to the objective of our problem~\eqref{eq:prob:sp:empirical} as long as the costs $c(y,z)$ are in $[0,1]$ (see their footnote 4). However, in our case, these can be in $[-1,1]$ (see Eq.~\ref{eqn:costs}). This does not affect their Theorem 1 and Lemma 2, but their Lemma 3 now holds with the right-hand side equal to $\frac{2+2\nu}{B}$ instead of $\frac{1+2\nu}{B}$. Their Lemma 2 immediately yields the bound $\hcost(\hQ)\le\hcost(Q)+2\nu$, whereas the modified Lemma 3 yields the bound
$\bigAbs{\hgamma_{a,z}(\hQ)}
 \le\heps_{a}+\frac{2+2\nu}{B}$ for all $a,z$, finishing the proof.
%
%
%
\end{proof}

\section{Proof of Theorem~\ref{thm:main:sp}}
\label{app:thm:main:sp}

Before proving the theorem, we recall a standard definition of the Rademacher complexity of a class of functions, which plays an important role in our deviation bounds. Let $\calG$ be a class of functions $g:\calU\to\R$ over some space $\calU$. Then the (worst-case) Rademacher complexity of $\calG$ is defined as:
\begin{equation}
R_n(\calG)
\coloneqq
\sup_{u_1,\ldots,u_n \in \calU} \E_{\sigma} \left[\sup_{g \in \calG} \left|\frac{1}{n} \sum_{i=1}^n \sigma_i g(u_i)\right|\right],
\label{eqn:rademacher}
\end{equation}
where the expectation is over the i.i.d.\ random variables $\sigma_1,\ldots,\sigma_n$ with $\P(\sigma_i = 1) = \P(\sigma_i = -1) = 1/2$.

The Rademacher complexity of a class $\calG$ can be used to obtain uniform bounds of any Lipschitz continuous
transformations of $g\in\calG$ as follows:
\begin{lemma}
\label{lemma:dev}
Let $D$ be a distribution over a pair of random
variables $(S,U)$ taking values in $\calS\times\calU$.
Let $\calG$ be a class of functions $g:\calU\to[0,1]$, and let $\phi:\calS\times[0,1]\to[-1,1]$ be a contraction in its second argument, i.e.,
for all $s\in\calS$ and all $t,t'\in[0,1]$, $\abs{\phi(s,t)-\phi(s,t')}\le\abs{t-t'}$.
Then with probability at least $1-\delta$, for all $g\in\calG$,
  \[
  \BigAbs{
     \Ehat\bigBracks{\phi(S,g(U))}-\E\bigBracks{\phi(S,g(U))}
  }
  \le
     4R_n(\calG) + \frac{2}{\sqrt{n}} + \sqrt{\frac{2\ln(2/\delta)}{n}}
\enspace,
  \]
where the expectation is with respect to $D$ and the empirical expectation is based on $n$ i.i.d.\ draws from $D$. If $\phi$ is also linear in its second argument then a tighter bound holds, with $4R_n(\calG)$ replaced by $2R_n(\calG)$.
\end{lemma}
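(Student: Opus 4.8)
The plan is to read Lemma~\ref{lemma:dev} as a textbook uniform-convergence bound for the transformed class $\phi\circ\calG \coloneqq \bigBraces{(s,u)\mapsto\phi\bigParens{s,g(u)} : g\in\calG}$, whose members take values in $[-1,1]$, assembling it from three standard ingredients: bounded-differences concentration of the supremum deviation around its mean, symmetrization to pass to the Rademacher complexity of $\phi\circ\calG$, and the Ledoux--Talagrand contraction inequality to pass from there back to $R_n(\calG)$, using that $\phi(s,\cdot)$ is a $1$-Lipschitz map of $[0,1]$ into $[-1,1]$.

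First I would set $\Psi \coloneqq \sup_{g\in\calG}\bigAbs{\Ehat\bigBracks{\phi(S,g(U))} - \E\bigBracks{\phi(S,g(U))}}$, viewed as a function of the $n$ i.i.d.\ samples $(S_i,U_i)$. Since $\abs{\phi}\le1$, altering a single sample shifts every empirical average by at most $2/n$ and hence shifts $\Psi$ by at most $2/n$; McDiarmid's inequality, after a union bound over the two directions of the deviation (which is where the $2/\delta$ appears), then gives $\Psi\le\E[\Psi]+\sqrt{2\ln(2/\delta)/n}$ with probability at least $1-\delta$. It remains to bound $\E[\Psi]$, and the standard ghost-sample argument reduces this to $\E[\Psi]\le 2\,\E_{S,U,\sigma}\bigBracks{\sup_{g}\bigAbs{\tfrac1n\sum_i\sigma_i\,\phi(S_i,g(U_i))}}$.

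For the inner Rademacher average I would condition on $(S_i,U_i)_i$ and center: write $\phi(S_i,t)=\bigParens{\phi(S_i,t)-\phi(S_i,0)}+\phi(S_i,0)$, where the maps $t\mapsto\phi(S_i,t)-\phi(S_i,0)$ are $1$-Lipschitz and vanish at $0$. The Ledoux--Talagrand comparison inequality, in the form valid for an absolute value inside the supremum, bounds the contribution of the first term by $2\,\E_\sigma\bigBracks{\sup_g\bigAbs{\tfrac1n\sum_i\sigma_i g(U_i)}}\le 2R_n(\calG)$, where $R_n(\calG)$ is the worst-case quantity of Eq.~\eqref{eqn:rademacher} and so is already a deterministic upper bound; the $g$-independent remainder contributes at most $\E_\sigma\bigAbs{\tfrac1n\sum_i\sigma_i\phi(S_i,0)}\le\bigParens{\tfrac1{n^2}\sum_i\phi(S_i,0)^2}^{1/2}\le 1/\sqrt n$ using $\abs{\phi(S_i,0)}\le1$. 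Putting these together yields $\E[\Psi]\le 2\bigParens{2R_n(\calG)+1/\sqrt n}=4R_n(\calG)+2/\sqrt n$, and combined with the concentration step this is the claim. For the linear refinement, $\phi(s,t)=a(s)t+b(s)$ with $\abs{a(s)}\le1$ forced by Lipschitzness, so the re-centered part is the genuinely linear map $t\mapsto a(s)t$; the absolute-value comparison inequality for coordinatewise multiplication by coefficients in $[-1,1]$ holds \emph{without} the extra factor of $2$, so $2R_n(\calG)$ improves to $R_n(\calG)$ and the bound has $2R_n(\calG)$ in place of $4R_n(\calG)$.

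The only delicate point, and the main obstacle, is precisely this contraction step under the paper's definition of Rademacher complexity, which retains an absolute value inside the supremum and does not assume $\calG$ is symmetric or contains the zero function: that is what forces the extra factor of $2$ in the general case and produces the additive $O(1/\sqrt n)$ from re-centering $\phi(s,\cdot)$ at the origin, both of which disappear in the linear case. Everything else — symmetrization and McDiarmid — is routine, and the boundedness and Lipschitz hypotheses on $\phi$ are exactly what make the stated constants $4$, $2/\sqrt n$, and $\sqrt{2\ln(2/\delta)/n}$ fall out.
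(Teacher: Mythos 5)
Your proposal is correct and follows essentially the same route as the paper's proof: the paper packages your McDiarmid-plus-symmetrization step into a citation of Theorem~3.2 of Boucheron, Bousquet, and Lugosi, handles the re-centering at $\phi(s,0)$ via Theorem~12(5) of Bartlett and Mendelson (yielding the same $1/\sqrt{n}$ term you get by Cauchy--Schwarz), and invokes the Ledoux--Talagrand contraction (Theorem~4.12, or Theorem~4.4 in the linear case) exactly as you do. The only cosmetic difference is your attribution of the $2/\delta$ to a union bound over the two deviation directions, which is unnecessary for a supremum of absolute deviations but harmless since it only weakens the constant to the stated one.
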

\begin{proof}
Let $\Phi\coloneqq\set{\phi_g}_{g\in\calG}$ be the class of functions $\phi_g: (s,u)\mapsto \phi(s,g(u))$.
By Theorem 3.2 of \citet{BoucheronBoLu05}, we then
have with probability at least $1-\delta$, for all $g$,
\begin{equation}
\label{eqn:phi:bound}
  \BigAbs{
     \Ehat\bigBracks{\phi(S,g(U))}-\E\bigBracks{\phi(S,g(U))}
  }
  =
  \BigAbs{\Ehat[\phi_g]-\E[\phi_g]}
  \le
  2 R_n(\Phi)
  +
  \sqrt{\frac{2\ln(2/\delta)}{n}}
\enspace.
\end{equation}
We will next bound $R_n(\Phi)$ in terms of $R_n(\calG)$. For a fixed tuple $(s_1,u_1),\dotsc,(s_n,u_n)$, we have
\begin{align*}
 \E_\sigma\Bracks{
     \sup_{g \in \calG}  \Abs{\sum_{i=1}^{n} \sigma_i \phi(s_i,g(u_i)) }
 }
&
\le
 \E_\sigma\Bracks{
     \sup_{g \in \calG}  \Abs{\sum_{i=1}^{n} \sigma_i \BigParens{\phi(s_i,g(u_i))-\phi(s_i,0)} }
 }
 +
 \sqrt{n}
\\
&
\le
 2\E_\sigma\Bracks{
     \sup_{g \in \calG}  \Abs{\sum_{i=1}^{n} \sigma_i g(u_i)}
 }
 +
 \sqrt{n}
\end{align*}
where the first inequality follows from Theorem 12(5) of~\citet{BartlettMe02} and the last inequality follows from the contraction principle of~\citet{LedouxTa91}, specifically their Theorem 4.12.
Dividing by $n$ and taking a supremum over $(s_1,u_1),\dotsc,(s_n,u_n)$ yields the bound
\[
  R_n(\Phi)\le 2R_n(\calG)+\frac{1}{\sqrt{n}}.
\]
Together with the bound~\eqref{eqn:phi:bound}, this proves the lemma for an arbitrary contraction $\phi$. If $\phi$ is linear in its second argument, we get a tighter bound by invoking Theorem 4.4 of~\citet{LedouxTa91} instead of their Theorem 4.12.
\end{proof}

Our proof largely follows the proof of Theorems~2 and~3 of \citet{agarwal2018reductions}. We first use \Lemma{dev} to show that by solving the empirical problem~\eqref{eq:prob:sp:empirical}, we also obtain an approximate solution of the corresponding population problem:
\begin{align}
\label{eq:prob:sp}
  \min_{\mathclap{Q\in\Delta(\calH)}}\;\cost(Q)
\text{ s.t. }
  \bigAbs{\gamma_{a,z}(Q)}\le\eps_a
\;\;
  \text{$\forall a\in\calA, z\in\calZ$.}
%
\end{align}
The theorem will then follow by invoking the equivalence between problem~\eqref{eq:prob:sp} and the discretized
fair regression~\eqref{eq:fair-reg:sp:apx}, and adding up various approximation errors.

\paragraph{Bounding empirical deviations in the cost and constraints.}
To bound the deviations in the cost, we need to be a bit careful, because the definition of $\hcost$ mixes
the empirical expectation over the data with the averaging over $z\in\calZ$. For the analysis, we therefore
define
\[
   \hcost_z(h)=\Ehat\bigBracks{c(\underY,z)h(X,z)}
\quad
\text{and}
\quad
   \cost_z(h)=\E\bigBracks{c(\underY,z)h(X,z)}.
\]
Since $c(\underY,z)\in[-1,1]$, we can invoke \Lemma{dev} with $S=c(\underY,z)$, $U=(X,z)$, $\calG=\calH$, and $\phi(s,t)=st$ to obtain that with probability at least $1-\delta/2$ for all $z\in\calZ$ and all $h\in\calH$
\[
  \BigAbs{\hcost_z(h)-\cost_z(h)}
  \le
  2R_n(\calH)+\frac{2}{\sqrt{n}}+\sqrt{\frac{2\ln(4N/\delta)}{n}}
  =
  \otil(n^{-\beta})
,
\]
where the last equality follows by Assumption~\ref{ass:samples} and the setting $\cardZ\propto n^\beta$.
Taking an average over $z\in\calZ$ and a convex combination according to any $Q\in\Delta(\calH)$, we obtain by Jensen's inequality
that with probability at least $1-\delta/2$ for all $Q\in\Delta(\calH)$
\begin{equation}
\label{eq:dev:cost}
  \bigAbs{\hcost(Q)-\cost(Q)}
  =
  \otil(n^{-\beta})
.
\end{equation}

To bound the deviations in the constraints, we invoke \Lemma{dev} with $S=1$, $U=(X,z)$, $\calG=\calH$, and $\phi(s,t)=st$, but apply it to the data distribution conditioned on $A=a$. We thus obtain that with with probability at least $1-\delta/2$ for all $a\in\calA$, $z\in\calZ$, and $h\in\calH$
\[
  \BigAbs{\hgamma_{a,z}(h)-\gamma_{a,z}(h)}
  \le
  2R_{n_a}(\calH)+\frac{2}{\sqrt{n_a}}+\sqrt{\frac{2\ln(4\cardA N/\delta)}{n_a}}
.
\]
By Jensen's inequality this also means
that with probability at least $1-\delta/2$ for all $a\in\calA$, $z\in\calZ$, and $Q\in\Delta(\calH)$
\begin{equation}
\label{eq:dev:gamma}
  \bigAbs{\hgamma_{a,z}(Q)-\gamma_{a,z}(Q)}
  \le
  2R_{n_a}(\calH)+\frac{2}{\sqrt{n_a}}+\sqrt{\frac{2\ln(4\cardA N/\delta)}{n_a}}
.
\end{equation}
In the remainder of the analysis, we assume that \Eqs{dev:cost}{dev:gamma} both hold, which occurs with probability at least $1-\delta$ by the union bound.

\paragraph{Putting it all together.}

Given the settings of $\nu$, $B$ and $N$, we obtain by \Thm{alg:sp} that Algorithm~\ref{alg:regred:sp} terminates in $\order\bigParens{n^{4\beta}\ln (n^\beta\card{\calA})}$ iterations, as desired, and returns a distribution $\hQ$ which compares favorably
with any feasible point $Q$ of the empirical problem~\eqref{eq:prob:sp:empirical}, meaning that for
any such $Q$, we have
\begin{align}
\label{eq:sp:app:1}
   \hcost(\hQ)
   &\le
   \hcost(Q)+\order(n^{-\beta})
\\
\label{eq:sp:app:2}
   \bigAbs{\hgamma_{a,z}(\hQ)}
   &\le\heps_{a}+\order(n^{-\beta})
\quad
\text{for all $a\in\calA$, $z\in\calZ$.}
\end{align}
Now bounding $\hcost(\hQ)$ and $\hcost(Q)$ in \Eq{sp:app:1} via the uniform convergence bound~\eqref{eq:dev:cost}, we obtain
\begin{align}
\label{eq:sp:app:3}
   \cost(\hQ)
&
   \le
   \cost(Q)+\otil(n^{-\beta}).
\intertext{%
Similarly, bounding $\hgamma_{a,z}(\hQ)$ via the bound~\eqref{eq:dev:gamma} and
$\heps_a\le\eps_a+\otil(n_a^{-\beta})$ via Assumption~\ref{ass:samples} and our setting of $C'$, we obtain
}
\label{eq:sp:app:4}
   \bigAbs{\gamma_{a,z}(\hQ)}
&
   \le\eps_{a}+\otil(n_a^{-\beta})
\quad
\text{for all $a\in\calA$, $z\in\calZ$.}
\end{align}

Above, we assumed that $Q$ was a feasible point of the empirical problem~\eqref{eq:prob:sp:empirical}. However, assuming that \Eq{dev:gamma} holds, any
feasible solution of the population problem~\eqref{eq:prob:sp} is also feasible in
the empirical problem~\eqref{eq:prob:sp:empirical} thanks to our setting of $C'$. Thus,
\Eqs{sp:app:3}{sp:app:4} show that the solution $\hQ$ is approximately feasible and approximately optimal in the population problem~\eqref{eq:prob:sp}. It remains to relate $\hQ$ to the original fair regression problem~\eqref{eq:fair-reg:sp}.

First, by \Lemma{red:hf} and \Eqs{sp:app:3}{sp:app:4}, we can interpret $\hQ$ as a distribution over the set of discretized regressors $\underF$ and obtain that for all $\underQ\in\Delta(\underF)$ that
are feasible in the discretized fair regression problem~\eqref{eq:fair-reg:sp:apx}, we have
\begin{align}
\label{eq:sp:app:5}
   \loss_\alpha(\hQ)
&
   \le
   \loss_\alpha(\underQ)+\otil(n^{-\beta})
\\
\label{eq:sp:app:6}
   \bigAbs{\gamma_{a,z}(\hQ)}
&
   \le\eps_{a}+\otil(n_a^{-\beta})
\quad
\text{for all $a\in\calA$, $z\in[0,1]$,}
\end{align}
where in \Eq{sp:app:6} we have expanded the domain $z\in\calZ$ to $z\in[0,1]$ thanks to \Eq{gamma:apx}. Finally, by substituting the solution $\underQ^*$ of problem~\eqref{eq:fair-reg:sp:apx} as $\underQ$ in \Eq{sp:app:5} and
applying \Thm{sp:apx}, we obtain that for any $Q^*\in\Delta(\calF)$ that is feasible in the discretized fair regression problem~\eqref{eq:fair-reg:sp:apx}, we have
\begin{equation}
\label{eq:sp:app:7}
   \loss(\hQ)
   \le
   \loss(Q^*)+\alpha+\otil(n^{-\beta})=\loss(Q^*)+\otil(n^{-\beta}),
\end{equation}
where the last equality follows by our setting $\alpha=1/N=O(n^{-\beta})$. The theorem now follows from \Eqs{sp:app:7}{sp:app:6}.


\section{Algorithm for Fair Regression under Bounded Group Loss}
\label{sec:app:alg:ep}

In this section we provide a detailed pseudocode of our algorithm for fair regression under \epshort, described at a high level in Section~\ref{sec:max-loss-bound}.

\begin{algorithm}[t]
\caption{Fair regression with \ep}
\label{alg:regred:ep}
  \begin{algorithmic}[1]
    \Statex{Input:~~training examples $\braces{(X_i,Y_i,A_i)}_{i=1}^n$, slacks in fairness constraint $\hzeta_{a}\in[0,1]$, bound $B$, convergence threshold $\nu$}
\smallskip
    \Statex{Define best-response functions:}
    \Statex{~\hphantom{Input:}~$\BestF(\vlambda)\coloneqq\argmin_{f\in\calF} \LE(f,\vlambda)$}
    \Statex{~\hphantom{Input:}~$\BestLambda(Q)\coloneqq\argmax_{\vlambda\ge 0,\,\norm{\vlambda}_1\le B} \LE(Q,\vlambda)$}
\smallskip
    \State{Set learning rate $\eta=\nu/(2B)$}
    \State{Set $\vtheta_1=\vzero\in\R^{\card{\calA}}$}
\smallskip
    \For{$t=1, 2, \ldots$}
\smallskip
    \ForAll{$a$} \hfill \textit{// Compute $\vlambda_t$ from $\vtheta_t$ and find the best response $f_t$}
    \State{$\lambda_{t,a}\gets B\,\exp\braces{\theta_{t,a}}/\left(1+\sum_{a}\exp\braces{\theta_{t,a}}\right)$}
    \EndFor
    \State{$f_t\gets\BestF(\vlambda_t)$}
\medskip
    \State{$\hQ_t\gets\frac{1}{t}\sum_{t'=1}^t f_{t'},
      \quad
            \hvlambda_t\gets\frac{1}{t}\sum_{t'=1}^t \vlambda_{t'}$ \hfill \textit{// Calculate the current approximate saddle point}}
\medskip
    \State{$\overline{\nu}\gets \LE\bigParens{\hQ_t,\BestLambda(\hQ_t)}\;-\;\LE(\hQ_t,\hvlambda_t)$
           \hfill \textit{// Check the suboptimality of $(\hQ_t,\hvlambda_t)$}}
    \State{$\underline{\nu}\gets \LE(\hQ_t,\hvlambda_t)\;-\;\LE\bigParens{\BestF(\hvlambda_t),\hvlambda_t}$}
\medskip
    \If{$\max\set{\overline{\nu},\underline{\nu}}\le\nu$}
           \hfill \textit{// Terminate if converged}
    \smallskip
        \If{$\hgammaE(\hQ_t) \leq \hzeta_a + \tfrac{1+2\nu}{B}$ for all $a \in \calA$}
        \State{\textbf{return} $\hQ_t$}
        \Else
        \State{\textbf{return $\nullval$}}
        \EndIf
    \EndIf
\medskip
    \State{Set $\vtheta_{t+1}=\vtheta_t+\eta\hvgammaE(f_t)-\eta\hvzeta$\hfill \textit{// Apply the exponentiated-gradient update}}
    \EndFor
\end{algorithmic}
\end{algorithm}

\section{Proof of Theorem~\ref{thm:main:ep}}

The analysis of Algorithm~\ref{alg:regred:ep} proceeds similarly to the analysis of Algorithm~\ref{alg:regred:sp}.

\paragraph{Iteration complexity.} Similarly to our analysis of Algorithm~\ref{alg:regred:sp} in \Thm{alg:sp},
we can appeal to Theorem 1, and Lemmas 2 and 3 of \citet{agarwal2018reductions}. While our objective and constraints are for the distributions $Q$ over $[0,1]$-valued predictors $f\in\calF$, whereas their analysis is for distributions over $\set{0,1}$-valued classifiers, we can still directly use their Theorem 1, and Lemmas 2 and 3, because they only rely on the bilinear structure of the Lagrangian with respect to $Q$ and $\vlambda$ and the boundedness of the objective and constraints, which all hold in our setting. The maximal constraint violation, needed in their Theorem 1, is $\rho\le 1$. Therefore, their Theorem 1 implies that Algorithm~\ref{alg:regred:ep} terminates in at most $4B^2\ln(\cardA + 1)/\nu^2=O(n^{4\omega}\ln\cardA)$ iterations and
finds a $\nu$-approximate saddle point $\hQ$ of problem~\eqref{eq:minmax:ep}, albeit sometimes it ends up returning $\nullval$ instead of $\hQ$. To prove the theorem we consider two cases.

\paragraph{Case I: There is a feasible solution $Q^*$ to the original problem~\eqref{eq:fair-reg:ep}.}
Given the settings of $\nu$ and $B$, and using Lemmas 2 and 3 of \citet{agarwal2018reductions}, we obtain that the $\nu$-approximate saddle point $\hQ$ of the empirical problem~\eqref{eq:minmax:ep} satisfies
\begin{align}
\label{eq:ep:app:1}
  \hloss(\hQ)
  &\le
  \hloss(Q) + 2\nu
\\
\label{eq:ep:app:2}
  \hgammaE_a(\hQ)
  &
  \le
  \hzeta_a + \frac{1+2\nu}{B}
  \quad
  \text{for all $a\in\calA$,}
\end{align}
for any distribution $Q$ feasible in the empirical problem~\eqref{eq:fair-reg:ep:emp}. \Eq{ep:app:2} implies that in this case the algorithm returns $\hQ\ne\nullval$. It remains to argue that statements similar
to~\eqref{eq:ep:app:1} and~\eqref{eq:ep:app:2} hold for true expectations rather than just empirical expectations.

To turn the statements~\eqref{eq:ep:app:1} and~\eqref{eq:ep:app:2} into matching population versions, we use the concentration result of \Lemma{dev} similarly as in the proof of \Thm{main:sp}. Specifically, we invoke \Lemma{dev} with $S=Y$, $U=X$, $\calG=\calF$, and $\phi(s,t)=\ell(s,t)$, separately for the data distribution (with failure probability $\delta/2$) and the data distribution
conditioned on each $A=a$ (with failure probabilities $\delta/(2\cardA)\,$). We thus obtain that with probability at least $1-\delta$, for all $Q\in\Delta(\calF)$,
\begin{align*}
  \bigAbs{\hloss(Q)-\loss(Q)}
  &\le
  4R_{n}(\calF)+\frac{2}{\sqrt{n}}+\sqrt{\frac{2\ln\parens{4/\delta}}{n}}
\\
  \bigAbs{\hgammaE_a(Q)-\gammaE_a(Q)}
  &\le
  4R_{n_a}(\calF)+\frac{2}{\sqrt{n_a}}+\sqrt{\frac{2\ln\parens{4\cardA/\delta}}{n_a}}
  \quad
  \text{for all $a\in\calA$.}
\end{align*}
By Assumption~\ref{ass:samples:ep} and our setting of $C'$, this implies that with probability at least $1-\delta$, for all $Q\in\Delta(\calF)$,
\begin{align}
\label{eq:ep:app:3}
  \bigAbs{\hloss(Q)-\loss(Q)}
  &=\otil(n^{-\omega})
\\
\label{eq:ep:app:4}
  \bigAbs{\hgammaE_a(Q)-\gammaE_a(Q)}
  &\le C' n_a^{-\omega}
  \quad
  \text{for all $a\in\calA$.}
\end{align}
We continue the analysis assuming that the uniform convergence bounds~\eqref{eq:ep:app:3} and~\eqref{eq:ep:app:4} both hold. Instantiating the bound~\eqref{eq:ep:app:3} for $\hloss(Q)$ and $\hloss(\hQ)$ in \Eq{ep:app:1} yields,
for any $Q$ feasible in the empirical problem~\eqref{eq:fair-reg:ep:emp},
\begin{align}
\label{eq:ep:app:5}
  \loss(\hQ)
  &\le
  \loss(Q) + 2\nu + \otil(n^{-\omega})
  =
  \loss(Q) + \otil(n^{-\omega}),
\intertext{%
where the last equality follows by our setting of $\nu$. Similarly, instatianting the bound~\eqref{eq:ep:app:4} for $\hgammaE_a(\hQ)$ in \Eq{ep:app:2} yields}
\label{eq:ep:app:6}
  \gammaE_a(\hQ)
  &
  \le
  \hzeta_a + \frac{1+2\nu}{B} + \otil(n_a^{-\omega})
  \le
  \zeta_a + \otil(n_a^{-\omega})
  \quad
  \text{for all $a\in\calA$,}
\end{align}
where the last inequality follows by our setting of $B$, $\nu$, and $C'$ as well as the bound
$\hzeta_a\le\zeta_a+C'n_a^{-\omega}$ from Assumption~\ref{ass:samples:ep}.

Above, we assumed that $Q$ was a feasible point of the empirical problem~\eqref{eq:fair-reg:ep:emp}. However, assuming that \Eq{ep:app:4} holds, any
feasible solution of the population problem~\eqref{eq:fair-reg:ep} is also feasible in
the empirical problem~\eqref{eq:fair-reg:ep:emp} thanks to our setting of $C'$. Thus,
\Eqs{ep:app:5}{ep:app:6} hold for any $Q^*$ feasible in~\eqref{eq:fair-reg:ep}, proving the theorem in this case.

\paragraph{Case II: There is no feasible solution $Q^*$ to the original problem~\eqref{eq:fair-reg:ep}.} In this case, the $\nu$-approximate saddle point $\hQ$ that the algorithm finds may still satisfy
\begin{equation}
\label{eq:ep:app:7}
  \hgammaE_a(\hQ)\le\hzeta_a + \frac{1+2\nu}{B}
  \quad
  \text{for all $a\in\calA$,}
\end{equation}
in which case the algorithm returns $\hQ$ and the theorem holds vacuously since here is no feasible point $Q^*$. If the found approximate saddle point does not satisfy \Eq{ep:app:7}, then the algorithm returns $\nullval$ and the theorem also holds.

\section{Details for Efficient Implementation of Algorithm~\ref{alg:regred:sp}}
\label{sec:app:impl}

On the high level, \Alg{regred:sp} keeps track of auxiliary vectors $\vtheta_t^+,\vtheta_t^-\in\R^{\cardA\cardZ}$. These are used to obtain the vector $\vlambda_t\in\R{2\cardA\cardZ}$ played by the $\vlambda$-player. The $Q$-player responds to $\vlambda_t$ by playing $h_t\in\calH$ (\Step{best:response}), which is used to update $\vtheta_t^\pm$. The average of $\vlambda_t$'s and $h_t$'s played so far is the candidate solution for the saddle-point problem
\begin{equation}
\label{eq:minmax}
\adjustlimits\min_{\;\;Q\in\Delta\;\;}\max_{\vlambda\ge\vzero,\,\norm{\vlambda}_1\le B} L(Q,\vlambda).
\end{equation}
The algorithm checks the suboptimality of this solution and terminates once the convergence $\nu$ is reached (\StepsRange{conv:check}).


The updates of
$\vtheta_t^+$ and $\vtheta_t^-$ (\Step{theta:update}) run in time $O(\cardA\cardZ)$ assuming that $\vgamma(h_t)$ has already been calculated. Similarly, the transformation of $\vtheta_t^+$ and $\vtheta_t^-$ to $\vlambda_t$ (\StepsRange{lambda:update}) runs in time $O(\cardA\cardZ)$, because the denominator is the same across all coordinates and so it needs to be computed only once. We next show that the remaining operations, except for the two $\BestH$ calls, run in time $\order(n\log n+\card{\calA}\cardZ)$.



\textbf{Computation of $\hcost(h_f)$ and $\hvgamma(h_f)$.} This computation is implicit in the calculation of Lagrangian in Steps~\ref{step:L:bestL}--\ref{step:L:bestH}, and also in the updates of $\vtheta_t^+$ and $\vtheta_t^-$ in \Step{theta:update}. By \Eq{loss:hf},
\[
  \hcost(h_f)=\Ehat\BigBracks{\ell_\alpha(Y,\underf(X))-\ell\bigParens{Y,\,\tfrac{\alpha}{2}}},
\]
so it can be calculated in time $\order(n)$ in a single pass over examples. To calculate $\hvgamma(h_f)$, we keep the training examples partitioned
into $\card{\calA}$ disjoint sets according to their protected attribute $A$. Let $n_{a}$ be the number of examples with $A=a$.
We sort these examples according to $f(X)$ in time $\order(n_{a}\log n_{a})$. Now going through these examples from the largest $f(X)$ value to the smallest allows us to calculate the conditional expectations
\[
  \Ehat[h_f(X,z)\given A=a] = \Phat[f(X)\ge z\given A=a]
\]
across all $z$ in time $\order(n_{a}+\cardZ)$. Since we need to do this for all $a\in\calA$,
the overall runtime is $\order(n\log n+\card{\calA}\cardZ)$. Finally, to
calculate $\hvgamma$ we also need expectations $\Ehat[h_f(X,z)]$, which can be obtained by taking weighted sums
of $\Ehat[h_f(X,z)\given A=a]$ in time $\order(\card{\calA}\cardZ)$. Altogether, the running time
of computing $\hcost(h_f)$ and $\hvgamma(h_f)$ is therefore $\order(n\log n+\card{\calA}\cardZ)$.

\textbf{Computation of $L(h_f,\vlambda)$ and $L(\hQ_t,\vlambda)$.} Lagrangian is evaluated in Steps~\ref{step:L:bestL}--\ref{step:L:bestH} to determine whether the algorithm has converged. Note that $L(Q,\vlambda)$ depends on $Q$ only through $\hcost(Q)$ and $\hvgamma(Q)$.
If we have already computed these, $L(Q,\vlambda)$ can be evaluated in time $\order(\card{\calA}\cardZ)$. To calculate $L(h_f,\vlambda)$ for an arbitrary
$h_f$, we first need to calculate $\hcost(h_f)$ and $\hvgamma(h_f)$, so the overall running time is $\order(n\log n+\card{\calA}\cardZ)$. To calculate
$L(\hQ_t,\vlambda)$, note that $\hcost(\hQ_t)=\frac{1}{t}\sum_{t'=1}^t\hcost(h_{t'})$, so we can obtain $\hcost(\hQ_t)$ from $\hcost(\hQ_{t-1})$ at the cost of evaluation
of $\hcost(h_t)$ and similarly for $\hvgamma(\hQ_t)$. Therefore, the first evaluation of the form $L(\hQ_t,\vlambda)$ takes time $\order(n\log n+\card{\calA}\cardZ)$
and each consequent evaluation takes time $\order(\card{\calA}\cardZ)$.

\textbf{Computation of $\BestLambda(\hQ_t)$.} The best response of the $\vlambda$-player is used in \Step{L:bestL} to determine the suboptimality of the current solution. Given an arbitrary $Q$, $\BestLambda(Q)$ returns
$\vlambda$ maximizing $L(Q,\vlambda)$ over $\vlambda\ge\vzero,\norm{\vlambda}_1\le B$. By first-order optimality, the optimizing $\vlambda$ is either $\vzero$ or puts all of its mass on the most
violated constraint among $\hgamma_{a,z}(Q)\le\heps_{a}$, $\hgamma_{a,z}(Q)\ge-\heps_{a}$. In particular,
let $\ve_{a,z}^+$ and $\ve_{a,z}^-$ denote the basis vectors corresponding to coordinates $\lambda_{a,z}^+$ and $\lambda_{a,z}^-$. The call to $\BestLambda(Q)$ first calculates
\[
  (a^*\!,z^*)=\smash{\argmax_{(a,z)}\bigBracks{\abs{\hgamma_{a,z}(Q)}-\heps_{a}}}
\]
and then returns
\[
   \begin{cases}
   B\ve_{a^*,z^*}^+
      &\text{if $\hgamma_{a^*\!,z^*}(Q)>\heps_{a^*}$}
   \\
   B\ve_{a^*,z^*}^-
      &\text{if $\hgamma_{a^*\!,z^*}(Q)<-\heps_{a^*}$}
   \\
   \vzero
      &\text{otherwise.}
   \end{cases}
\]
Thus, $\BestLambda(\hQ_t)$ can be calculated in time $\order(\card{\calA}\cardZ)$ as long as we have $\hvgamma(\hQ_t)$, whose computation we have already accounted for within the computation of the Lagrangian of the form $L(\hQ_t,\vlambda)$.

\subsection{Details for Reduction to Cost-sensitive Classification}

By definition of $\hgamma_{a,z}(h)$, we have
\begin{align}
\notag
\sum_{a,z}\lambda_{a,z}\hgamma_{a,z}(h)
&=
  \sum_{a,z}\lambda_{a,z}\Parens{\frac{1}{p_{a}}\Ehat\bigBracks{h(X,z)\ind\set{A=a}}
  - \Ehat\bigBracks{h(X,z)}}
\\
\notag
&=
  \Ehat\BiggBracks{
  N\E_Z\BiggBracks{
     \sum_{a}\lambda_{a,Z}\,h(X,Z)\Parens{\frac{\ind\set{A=a}}{p_{a}} - 1}
  }}
\\
\label{eq:hgamma:1}
&=
  \Ehat\biggBracks{
  N\E_Z\biggBracks{
     \biggParens{
     \frac{\lambda_{A,Z}}{p_A} -\!\sum_a\lambda_{a,Z}
     }
     h(X,Z)
  }}.
\end{align}
In particular, \Eqs{hcost}{hgamma:1} imply that the minimization of $L(h,\vlambda)$ is indeed equivalent to minimizing the right-hand side of \Eq{c:lambda} as claimed.

\subsection{Details for Reduction to Least-squares Regression}
\label{sec:app:impl:ls}

The construction of the least-squares regression data set begins with \Eq{c:lambda}, which states
that for $h_f\in\calH$
\begin{align}
  \hcost(h_f)
  +
  \sum_{a,z}\lambda_{a,z}\hgamma_{a,z}(h_f)
\label{eq:c:lambda:2}
&\,=\,
  \frac1n\sum_{i\le n}\sum_{z\in\calZ}
     \frac1N c_{\vlambda}(\underY_i,A_i,z)h_f(X_i,z).
\end{align}
We will now rewrite the inner summation over $z$ as a loss function for the predictor $f$, parameterized by $\vlambda$. More precisely, for any $a\in\calA$ and $\ty\in\tcalY$, let the ``loss'' function for a prediction $u$ be defined as
\[
  g_{\vlambda}(\ty,a,u)=\sum_{z\in\calZ, z\le u}\frac1N c_{\vlambda}(\ty,a,z).
\]
Now consider a specific $i$ in \Eq{c:lambda:2}. Let $X\coloneqq X_i$, $A\coloneqq A_i$ and $\tY\coloneqq \underY_i$. Then the summation over $z$ for this specific $i$ can be written as
\begin{align*}
\sum_{z\in\calZ}
     \frac1N
     c_{\vlambda}(\tY,A,z)h_f(X,z)
&\,=\,
\sum_{z\in\calZ}
     \BigBracks{g_{\vlambda}(\tY,A,z)-g_{\vlambda}(\tY,A,z-\alpha)}\ind\set{f(X)\ge z}
\\
&\,=\,
  g_{\vlambda}(\tY,A,\floors{f(X)}_\alpha)
\\
&\,=\,
  g_{\vlambda}(\tY,A,f(X))
.
\end{align*}
%
Plugging this back into \Eq{c:lambda:2}, we see that the minimization of \Eq{c:lambda:2} over $h\in\calH$ is equivalent to the minimization of the empirical loss under $g_{\vlambda}$ among $f\in\calF$:
\[
  \min_{f\in\calF}
  \Bracks{
  \frac1n\sum_{i=1}^n
     g_{\vlambda}(\underY_i,A_i,f(X_i))
  }.
\]
Solving this problem directly seems to require access to a generic optimization oracle. We instead use a heuristic, where we first pick
\[
  U_i\in\argmin_{u\in[0,1]} g_{\vlambda}(\underY_i,A_i,u)
\]
and then seek to solve the least-squares regression problem
\[
  \min\sum_{i\le n} (U_i-f(X_i))^2
.
\]
To obtain the values $U_i$ we first calculate the values $c_{\vlambda}(\ty,a,z)$ across all $\ty$, $a$, and $z$
in the overall time $\order(\card{\calA}\card{\tcalY} N)$. Then, using the definition of $g_{\vlambda}$, the minimizer
of $g_{\vlambda}(\ty,a,u)$ over $u$ can be found in time $\order(N)$ for each specific value of $\ty$ and $a$, so
all the minimizers can be precalculated in time $\order(\card{\calA}\card{\tcalY} N)$. Thus, preparing the data set for
the least-squares reduction takes time $\order(\card{\calA}\card{\tcalY} N)$ and the resulting regression data set
is of size $n$.\looseness=-1

\subsection{Details for Reduction to Risk Minimization under $\ell$}
\label{app:reduction:ell}

The same reasoning that yielded the reduction to the least-squares regression can be used to derive
a reduction to risk minimization under any loss $\ell(y,u)$ that is convex in $u$. We again begin with computing the minimizers $U_i$ for each $g_{\vlambda}(\underY_i, A_i, f(X_i))$ term. Now suppose that there are two values $\tY_{i,1},\tY_{i,2} \in [0,1]$ such that $\frac{\partial}{\partial u} \ell(\tY_{i,1},U_i) \leq 0$ and $\frac{\partial}{\partial u} \ell(\tY_{i,2},U_i) \geq 0$ (if $\ell$ is non-smooth, we can pick arbitrary elements of the subdifferential set). If no such pair exists, then it is not possible to obtain $f(X_i) = U_i$ by minimizing $\ell$ over any distribution of examples since the gradient can never vanish at $U_i$. However, if such a pair exists, we can induce weights $W_{i,1}, W_{i,2}\in[0,1]$ such that $W_{i,1} + W_{i,2} = 1$ and
\[\textstyle
  W_{i,1}\,\frac{\partial}{\partial u} \ell(\tY_{i,1},U_i) + W_{i,2}\,\frac{\partial}{\partial u} \ell(\tY_{i,2},U_i) = 0.
\]
Hence, we create two weighted examples for each $(X_i,A_i,Y_i)$ triple in our dataset, and solve
\[
\min_{f \in \calF}\,\sum_{i=1}^n\BigBracks{W_{i,1}\,\ell(\tY_{i,1}, f(X_i)) + W_{i,2}\,\ell(\tY_{i,2},f(X_i))}.
\]
For instance, for logistic loss, we can always pick $\tY_{i,1} = 0$, $\tY_{i,2} = 1$ and $W_{i,2} = U_i$. The complexity of this reduction is identical to that of the least-squares reduction above, but the resulting risk minimization problem might be better aligned with the original problem as the experiments
in Section~\ref{sec:experiments} show.

\section{Additional Experimental Results}
\label{app:exp}

In this section, we include further details on our experimental
evaluation.

\paragraph{Evaluation on the training sets.} In Figure~\ref{train} we
include the training performances of our algorithm and the baseline
methods, including the SEO method and the unconstrained
regressors. Our method generally dominated or closely matched the baseline
methods. The SEO method provided solutions that were not Pareto
optimal on the law school data set.

\begin{figure*}
  \includegraphics[width=\textwidth, height=9cm]{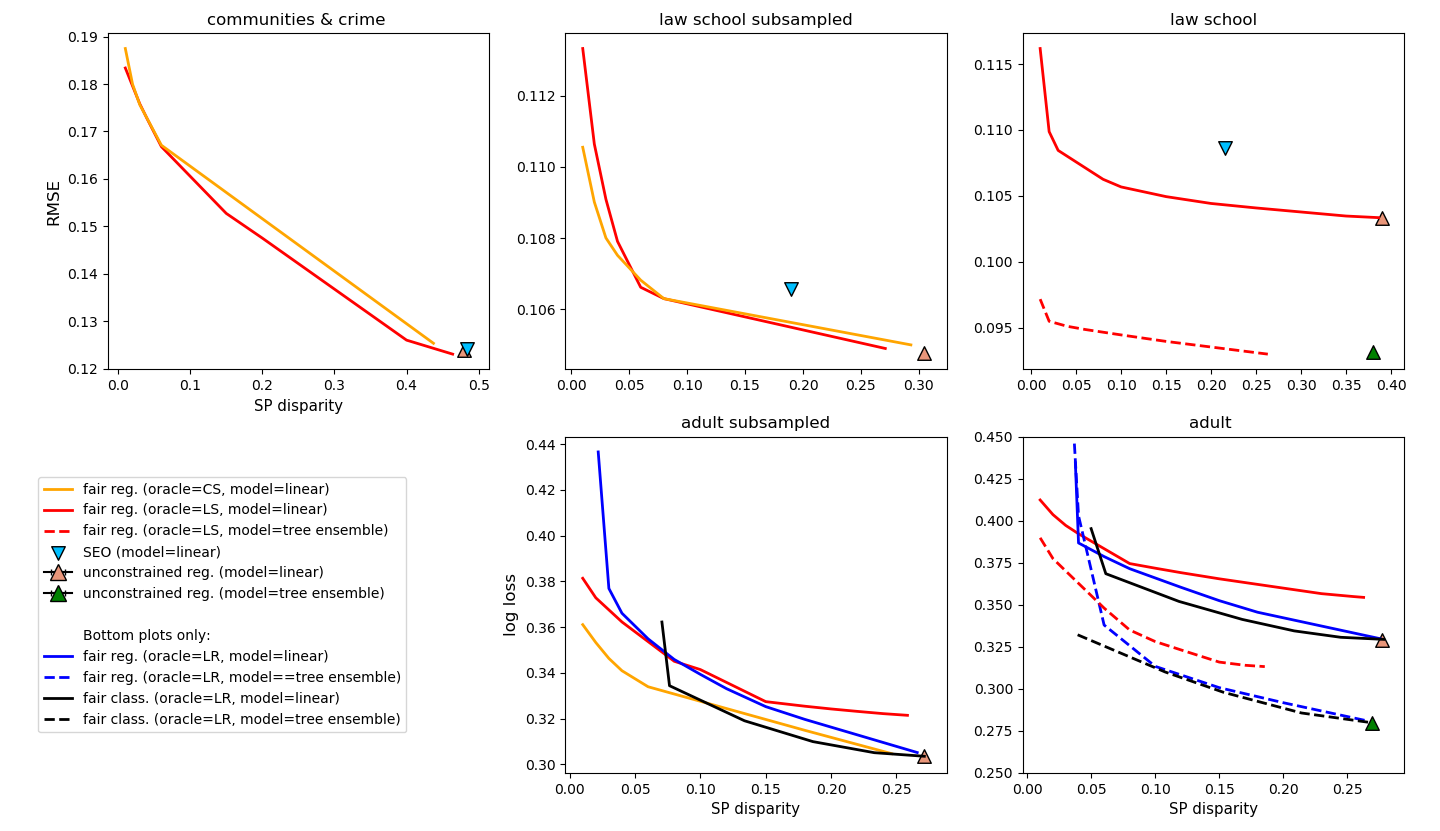}
\vspace{-8mm}
\caption{Training loss versus constraint violation with respect to
  DP. For our algorithm, we varied the fairness slackness parameter
  and plot the Pareto frontiers of the sets of returned predictors. For the logistic regression experiments, we also plot the Pareto frontiers of the sets of returned predictors given by fair classification reduction methods. }
\label{train}
\end{figure*}

\paragraph{Implementation of the cost-sensitive oracle.}
Given an instance of cost-sensitive classification problem, CS oracle
optimizes the equivalent weighted binary classification problem on the
data $\{W_i, X_i', Y_i\}_{i=1}^n$ with each $X_i' = (x_i, z_i)$ (see
Section \ref{sec:oracles} for the transformation). The oracle aims to
solve
\begin{equation}\label{hard}
\min_{h\in \calH}  \sum_{i=1}^n W_i \mathbf{1}\left\{ h(X_i') \neq Y_i \right\}.
\end{equation}
where every function $h$ in the class $\calH$ is parameterized by a
vector $\beta$ and defined as
$h(x, z) = \mathbf{1}\left\{\langle \beta, x\rangle \geq z\right\}$
for any input $(x, z)$. Instead of optimizing over the objective in
\eqref{hard}, we will consider the following minimization problem with
hinge loss. It will be convenient to consider the labels $Y_i$ take
values $\{\pm 1\}$ and each predictor $h$ predicts in $\{\pm
1\}$. Then the optimization becomes:
\[
  \min_{\beta} \sum_{i=1}^n W_i \max\left\{0, \tfrac{\alpha}{2} -
    \langle \beta, x_i\rangle Y_i\right\}
\]
Furthermore, we will optimize over $\beta$ with $\ell_\infty$ norm
bounded by 1. Then we can encode the optimization problem as a linear
program:
\begin{align*}
&\min_{\beta, t} \quad\sum_{i} t_i\\
\mbox{for all }i\in[n]:& \quad t_i \geq 0,\\
\mbox{for all }i\in[n]:& \quad t_i \geq \tfrac{\alpha}{2} - Y_i \, \langle \beta , x_i \rangle,\\
\mbox{for all }j\in[d]:& -1 \leq \beta_j \leq 1.
\end{align*}
In our experiments, this optimization problem was solved with the Gurobi Optimizer~\cite{gurobi}.

\paragraph{Runtime comparison.}
We performed a comparison on the running time of a single call of the
three supervised learning oracles. On a subsampled law school data set
with 1,000 examples, we ran the oracles to solve an instance of the
$\BestH$ problem, optimizing over either the linear models or tree
ensemble models. The details are listed in Table~\ref{table}. We also
compare the number of oracle calls for different specified values of
fairness slackness.

\begin{table}
\centering
\begin{tabular}{|l|l|l|}
\hline
Oracle & \begin{tabular}[c]{@{}l@{}}Model\\ class\end{tabular} & \begin{tabular}[c]{@{}l@{}}Runtime per call\\ (seconds)\end{tabular} \\ \hline
CS     & linear                                                & 18.94                                                                \\ \hline
LS     & linear                                                & 3.48                                                                 \\ \hline
LS     & tree ensemble                                         & 3.60                                                                 \\ \hline
LR     & linear                                                & 3.62                                                                 \\ \hline
LR     & tree ensemble                                         & 3.69                                                                 \\ \hline
\end{tabular}
\caption{Runtime comparison on the oracles over different model
  classes. We ran the oracles on a sub-sampled law school data set
  with 1,000 examples, using a machine with a 2.7 GHz Intel processor
  and 16GB memory.}
\label{table}
\end{table}

\begin{figure*}[!t]
  \includegraphics[width=\textwidth, height=4cm]{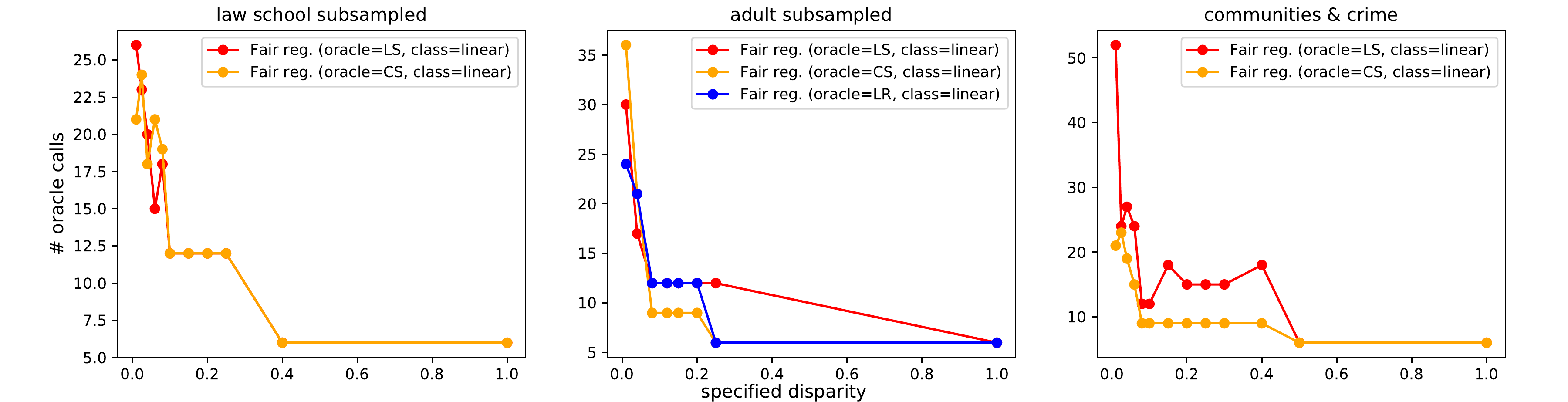}
\vspace{-8mm}
\caption{Number of oracle calls versus specified value of fairness slackness.}
\end{figure*}


\end{document}